  \def\title@font{\Large\bfseries}
  \let\ltx@maketitle\@maketitle
  \def\@maketitle{\bgroup%
   \let\ltx@title\@title%
    \def\@title{\centering\resizebox{\linewidth}{!}{%
      \mbox{\title@font\ltx@title}%
    }}%
    \ltx@maketitle%
    \addvspace{-1\baselineskip}\egroup}
\title{Smooth Bandit Optimization: Generalization to H\"older Space}
\begin{document}
\date{}
\author[1]{Yusha Liu\thanks{yushal@cs.cmu.edu}}
\author[2]{Yining Wang \thanks{yining.wang@warrington.ufl.edu}}
\author[1]{Aarti Singh \thanks{aarti@cs.cmu.edu}}
\affil[1]{\normalsize Machine Learning Department, Carnegie Mellon University}
\affil[2]{\normalsize Department of Information Systems and Operations Management, University of Florida}
\maketitle
\begin{abstract}

  We consider bandit optimization of a smooth reward function, where the goal is cumulative regret minimization. This problem has been studied for $\alpha$-H\"older continuous (including Lipschitz) functions with $0<\alpha\leq 1$. Our main result is in generalization of the reward function to H\"older space with exponent $\alpha>1$ to bridge the gap between Lipschitz bandits and infinitely-differentiable models such as linear bandits. For H\"older continuous functions, approaches based on random sampling in bins of a discretized domain suffices as optimal. In contrast, we propose a class of two-layer algorithms that deploy misspecified linear/polynomial bandit algorithms in bins. We demonstrate that the proposed algorithm can exploit higher-order smoothness of the function by deriving a regret upper bound of $\tilde{\bigo}(T^\frac{d+\alpha}{d+2\alpha})$ {for when $\alpha>1$}, which matches existing lower bound. 
  We also study adaptation to unknown function smoothness over a continuous scale of H\"older spaces indexed by $\alpha$, with a bandit model selection approach applied with our proposed two-layer algorithms. We show that it achieves regret rate that matches the existing lower bound for adaptation within the $\alpha\leq 1$ subset.  
\end{abstract}
\section{Introduction}

This paper considers the problem of black-box optimization of a reward function 
$f: \Xcal\rightarrow\mathcal{R}$, that is bounded and defined on a compact $\xdim$-dimensional domain $\Xcal$, using active queries. At each round, the learner chooses an action $x_t$ by leveraging the previously collected data and observes a noisy and zeroth order feedback of the function value $f(x_t)$. In the bandit setting, the goal is to minimize the cumulative regret with respect to global maxima. This is also known as the continuum-armed bandit problem. The bandit framework is different from 
standard global zeroth order optimization 
because of its unique exploration-exploitation dilemma. 
While in zeroth order optimization problems, pure exploration will often suffice since the performance is measured by simple regret (i.e. difference between the optimized function value and true function maxima), in bandit optimization, the queried function values need to be controlled through the entire optimization process to minimize the cumulative regret. Therefore, the algorithms require different and often more careful design. 

Most existing works on continuum-armed bandit optimization either assume parametric models such as linear bandits (\cite{dani2008stochastic, abbasi2011improved, rusmevichientong2010linearly})
for the reward function, or a black-box model where the reward function is assumed to be $\alpha$-H\"older continuous (including Lipschitz) with $0<\alpha\leq 1$ with respect to some known metric~(\cite{kleinberg2005nearly, auer2007improved, kleinberg2008multi, bubeck2010x, bubeck2011lipschitz, locatelli2018adaptivity}). The main purpose of this paper is to extend this assumption to the more general H\"older function space (definition~\ref{def: Holder space}) with exponent $\alpha > 1$ and exploit the higher order of function smoothness.
{Generalization to $\alpha>1$ is a parallel to the H\"older assumpions in fundamental results in nonparametric regression~(\cite{stone1982optimal}), which has been used in a variety of applications such as economics~(\cite{yatchew1998nonparametric}).}
Approaches based on fitting an appropriate function using random samples in bins of a discretization of the domain {(i.e., exploration)} suffice as optimal for controlling cumulative regret for H\"{o}lder continuous reward functions with $\alpha\leq 1$, as well as controlling simple regret of  H\"{o}lder smooth reward functions with any $\alpha > 0$. In contrast, controlling cumulative regret for H\"{o}lder smooth reward functions with $\alpha > 1$ requires finer control in bins over the queried values via a local exploration-exploitation tradeoff. Thus, instead of using a single layer algorithm that randomly samples from selected bins, we propose a class of algorithms that use two layers of bandit algorithms - one multi-armed bandit algorithm operating over the bins, and another set of misspecified linear/polynomial bandit algorithms operating 
in each bin to govern the local exploration-exploitation tradeoff.  We derive regret bounds for this class of two-layer bandit algorithms and show that they match the existing lower bounds apart from log factors. 

Additionally, we study the problem of adaptation to smoothness exponent $\alpha$ for a continuous scale of H\"older spaces.
Unlike the simple regret minimization setting where this adaptation comes at no cost in terms of the minimax rates, it was shown by~\cite{locatelli2018adaptivity} that 
it is generally impossible to achieve minimax adaptation under cumulative regret. 
We propose a procedure with regret bound that matches the existing adaptive lower bound 
with only access to the range of the unknown parameter $\alpha$.
We start by describing related works, followed by a summary of our contributions. 




\subsection{Related Works}
\paragraph{Continuum-armed Bandit.}
In continuum-armed bandit problems, the domain $\Xcal$ is allowed to be a measurable space, and the set of arms is therefore infinite. Previous works in continuum-armed bandit usually assumes global smoothness (\cite{kleinberg2005nearly}) of the reward function or local smoothness (e.g. \cite{auer2007improved}) around the global maxima. The smoothness condition, in particular, is defined as Lipschitz continuity with respect to some metrics (\cite{kleinberg2005nearly, kleinberg2008multi}) or dissimilarity functions (\cite{kleinberg2008multi, bubeck2010x}), or $\alpha$-H\"older continuity with $0<\alpha\leq 1$ (\cite{kleinberg2005nearly,auer2007improved}). Worst-case lower bound under the Lipschitz assumption is presented in~\cite{kleinberg2008multi} and that under the H\"older continuity assumption in~\cite{locatelli2018adaptivity}. 

Existing works 
rarely consider the generalization to H\"older space. 
Recently ~\cite{hu2020smooth} studied contextual bandit with reward functions in H\"older spaces,
however, the reward function is assumed to be smooth with respect to the observed contexts and the action set is finite. For non-contextual continuum-armed bandits, \cite{akhavan2020exploiting} focus on the strongly convex subset of 
functions in H\"older spaces with $\alpha\geq 2$ by using projected gradient-like algorithms. \cite{grant2020thompson} analyze Thompson sampling (TS), a Bayesian method, on H\"older spaces with integer-valued exponents and derive a suboptimal upper bound based on the complexity of the function space\footnote{They comment that the reason could be either the analysis being suboptimal or the nature of TS. They also derive lower bounds under one-dimension setting, but as we later remark in this paper, the same lower bound can be implied by \cite{wang2018optimization} under a more general setting. }.

\paragraph{Adaptivity to Smoothness of the Reward Functions.}
An intriguing problem is whether an algorithm that is oblivious to the H\"older exponent $\alpha$ can simultaneously achieve minimax rates for a range of values for $\alpha$. For non-contextual continuum-armed bandits, this has been discussed only under the H\"older continuous($\alpha\leq 1$) setting.
In particular, ~\cite{locatelli2018adaptivity} 
state that generally, such minimax adaptation to $\alpha$ is impossible by providing a worst-case lower bound for adaptation between two H\"older-continuous function spaces. Additionally, they propose conditions under which it would become possible. (For the contextual finite-armed bandit studied in~\cite{hu2020smooth}, \cite{gur2019smoothness} provide lower bounds with similar rates and the extra conditions as well.) However, it remains unclear that, without the extra conditions, whether an algorithm can achieve the lower bound when adapting to a continuous scale of general H\"older spaces.   

\paragraph{Model Selection for Bandits.}
Another relevant line of work is more broadly model selection in bandit settings, which we will leverage in bandit optimization of H\"older-smooth functions as well as adaptation to the smoothness. In this problem, given a set of base algorithms on possibly different domains, the learner needs to adapt to the best one in an online fashion. The goal is to achieve cumulative regret comparable to the best base algorithm if it were run solely. \cite{bubeck2011lipschitz} study the model selection problem for adapting to the unknown Lipschitz constant of functions. 
\cite{foster2019model} study adapting to the unknown policy dimension in contextual linear bandits by estimating the gap between two policy classes. \cite{agarwal2016corralling} develop a general algorithm named Corral for bandit model selection under adversarial feedback. It uses online mirror descent to balance between base algorithms. For stochastic feedback particularly, \cite{pacchiano2020model} modify the Corral algorithm to relax requirements on base-algorithms and improve the result on some problem instances (including the one in~\cite{foster2019model}). Another relevant issue addressed in~\cite{krishnamurthy2019contextual} which study contextual continuum-armed bandits with Lipschitz continuous reward functions, is their use of the original Corral algorithm applied with EXP4 for adaptation to unknown Lipschitz constant. UCB-type algorithm for corralling base-algorithms is used in~\cite{arora2020corralling} under the assumption that the base-algorithms are finite-armed, and only one of them has access to the best arm. 

\subsection{Our Contributions}
We study bandit optimization of functions in general H\"older spaces. This paper furthers the previous works in the following two main aspects:
\begin{enumerate}
    \item We propose a novel class of two-layer bandit algorithms, where a carefully-chosen Meta-algorithm deploys misspecified bandit algorithms as arms. Our algorithms show explicitly how to exploit higher-order smoothness in achieving optimal exploration-exploitation tradeoff. We 
    derive worst-case regret bound for this algorithm that matches the existing lower bound except for log factors, for functions in H\"older space including {when $\alpha> 1$}. Our results bridges the gap between Lipschitz smooth bandits where the H\"older exponent is $\alpha=1$ and infinitely-differentiable problems such as linear bandits where the H\"older exponent is $\alpha=\infty$. 
    \item We study adaptation to a sequence of H\"older spaces indexed by a continuous but unknown variable of exponent $\alpha$. We propose a strategy with theoretical guarantee, which uses the bandit model selection algorithm Corral from~\cite{pacchiano2020model} applied with versions of our proposed two-layer algorithms. 
    The derived regret bound is to our knowledge the first result on upper bounds when adapting to a continuous scale of H\"older spaces in continuum-armed bandit optimization. 
\end{enumerate}

The rest of this paper is organized as follows: 
In section 2 we introduce the problem formulation and assumptions. We present the two-layer Meta-algorithms and theoretical guarantees in section 3. In section 4 we study the adaptation to unknown smoothness and conclude the paper in section 5 with some open questions. 
\section{Problem Formulation}
In this paper, we consider bandit optimization of smooth functions in H\"older space $\sum(\alpha, L)$ with $\alpha>1$. 
The H\"older space is defined formally in definition~\ref{def: Holder space}. 
Some works also study benign problem instances with additional ``growth" conditions  than the smoothness to characterize the difficulty of finding global maxima, for improvements in regret bounds. For example, \cite{auer2007improved} use a parameter to model the growth rate of Lebesgue measure of the near-optimal arms set as a function of the threshold. The near-optimality dimension in~\cite{bubeck2010x} uses packing number but has similar meaning. 
In this paper we will focus solely on worst-case regret to preserve simplicity and leave adaptation to benign cases as a future direction. The performance of the learner is measured by cumulative pseudo-regret as stated below where $x^* \in \argmax_{x\in\Xcal} f(x)$. 
Throughout this paper we will simply refer to the pseudo-regret as regret. 
\begin{align}\label{eq: def of pseudo-regret of function}
    R(T) &=\sum_{t=1}^T[f(x^*) - f(x_t)].
\end{align}
To formally define H\"older spaces, we first introduce some notations. Define the following notions for a vector $s=(s_1\dots s_\xdim)$: let $\vert s\vert = s_1 + \dots + s_\xdim$, $s! = s_1!\dots s_\xdim$ and $x^s = x_1^{s_1}\dots x_\xdim^{s_\xdim}$. And define $D^s = \frac{\partial^{\vert s\vert}}{\partial x_1^{s_1}\dots \partial x_\xdim^{s_\xdim}}$.
\begin{definition}[\cite{tsybakov2008introduction}]\label{def: Holder space}
    The H\"older space $\sum(\alpha, L)$ on domain $\Xcal\in \mathcal{R}^\xdim$ is defined as the set of functions $f:\Xcal\rightarrow \mathbf{R}$ that are $l=\floor{\alpha}$ times differentiable and have continuous derivatives\footnote{Only when referring to the order of H\"older smooth functions' derivatives do we denote $\floor{\cdot}$ as the largest integer \textit{strictly} less than input. In other places in this paper it denotes less or equal to input. }. $l$ is the largest integer that is strictly smaller than $\alpha$. A function $f$ in $\sum(\alpha, L)$ satisfies the following inequality\footnote{We use $l_\infty$ norm as in some works on adaptive confidence bands and optimization (\cite{low1997nonparametric,tsybakov2008introduction,hoffmann2011adaptive, wang2018optimization}).} for $\forall x, y\in\Xcal$. 
    \[D^sf(x) - D^sf(y) \leq L\lVert x-y\rVert_\infty^{\alpha-l},\quad  \forall s \  \text{s.t.} \vert s\vert=l .\]
    In particular, a function in $\sum(\alpha, L)$ is close to its Taylor approximation:
    $$\lvert f(x) - T_y^{l}(x)\rvert\leq L\lVert x-y\rVert_\infty^\alpha, \forall x, y \in \Xcal.$$ 
    We use $T_y^{l}$ to denote the $l$-degree Taylor polynomial around $y$,
        $T_y^{l}(x) = \sum_{\vert s\vert\leq l}\frac{(x-y)^s}{s!}D^sf(y).$
\end{definition}
\paragraph{Assumptions}
We specify the assumptions that are used throughout this paper. 
\begin{description}
\item[G1.]\label{assumption: global value bound}
The input domain $\Xcal$ is a hypercube $[0,1]^\xdim$. 
For simplicity assume the reward function is bounded: $\Vert f\Vert_\infty\leq 1$.  
\item[G2.]\label{assumption: holder smooth}
The function $f$ belongs to H\"older space $\sum(\alpha, L)$ with some constant $L>0$ \footnote{In this paper, for simplicity, we assume $L$ is some constant that satisfies assumption G1.}. 
\item[G3.]\label{assumption: bounded noise}
The observations are noisy: $y = f(x) + \eta$ where the noise $\eta$ is drawn from i.i.d zero mean sub-gaussian distribution with parameter~$\sigma$.
\end{description}
\section{Meta-algorithm and Analysis}\label{sec: meta algorithm}

A commonly used method for continuum-armed bandits is fixed discretization, which divides the continuous input domain into finite number of bins, to transform the problem into finite-armed bandit. 
Previous works mostly consider H\"older-continuous ($\alpha\leq 1$) functions. For example~\cite{auer2007improved} study the $\alpha$-H\"older continuous functions with $\alpha\leq 1$ 
for one-dimension domain, 
followed by~\cite{bubeck2010x} who generalize it to $\xdim$-dimensional domain and propose the HOO algorithm with adaptive discretization\footnote{The adaptive discretization does not change worst-case regret but has improvements on benign problems, as introduced in section 2.}.
In these works, it suffices to perform random sampling (\cite{auer2007improved, bubeck2010x}) or midpoint sampling (\cite{kleinberg2005nearly}) inside each bin. 
The worst-case regret bound for Lipschitz space of $\tilde\bigo(T^\frac{\xdim+1}{\xdim+2})$ are matched by the general lower bound of $\Omega(T^{\frac{\xdim+\alpha}{\xdim+2\alpha}})$(\cite{auer2007improved, bubeck2010x, locatelli2018adaptivity, bubeck2011lipschitz})
apart from log factors.  
However, if we apply the same methods of random sampling on fixed discretization (\cite{auer2007improved}) on functions with H\"older exponent $\alpha> 1$,
the regret incurred is $\tilde{\bigo}(T^\frac{\xdim+1}{\xdim+2})$ since the H\"older space with exponent $\alpha> 1$ is a subset of the Lipschitz function space. It prompts us to ask the question of whether a better rate that matches the dependence on $\alpha$ can be achieved for functions that are smoother than Lipschitz. 
An extreme is when $\alpha$ reaches infinity, where the reward model will be infinitely-differentiable, for example the stochastic linear bandit which enjoys $\tilde{\bigo}(T^\frac{1}{2})$ regret even on continuous domain (\cite{dani2008stochastic,abbasi2011improved}).  

\subsection{Algorithm Overview}
We keep to fixed discretization of the domain since we consider only the worst-case regret. We divide $\Xcal = [0,1]^\xdim$ into $n$ equal-sized hypercubes, leaving $n$ as a parameter of the algorithm.
As shown in definition~\ref{def: Holder space}, the function is locally well-approximated by Taylor polynomial which reduces to a {linear model of a feature map of $x$ with dimension $\lineardim$}. It is equivalent to observing a misspecified linear model inside each bin, the equivalence formally quantified in Lemma~\ref{lemma: local bias}. Therefore, local exploration-exploitation tradeoff can be achieved by a base algorithm with sublinear regret on such misspecified models, with a Meta-algorithm to balance the budgets between the base algorithms in the bins. 
\begin{lemma}\label{lemma: local bias}
    Let hypercube $\mathcal{B}_{\Delta}$ be a subset of the input space with volume $\Delta$. If a function satisfies assumption $G1\sim2$, there exists a linear parameter\footnote{We slightly abuse the notation and define short-hand notation $\la \theta,x\ra:= \theta_0+\sum_{i=1}^{\lineardim} \theta_i x_i$.} $\theta^* \in R^{\lineardim}$ and {feature map $\phi: x\mapsto \phi(x)\in R^{\lineardim}$, 
    such that $f$ can be approximated by the linear function: $\lVert f- \la\theta^*, \phi(x)\ra \rVert_\infty \leq \epsilon = {L} \Delta^{\frac{\alpha}{d}}$ for $x\in B_\Delta$. When $\alpha\leq 2$, $\lineardim = \xdim$; when $\alpha>2$, $\lineardim = \bigo(\xdim^l)$ with $l$ (definition~\ref{def: Holder space}).} 
    {Note that the linear parameter may not be unique. }
\end{lemma}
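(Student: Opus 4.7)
The plan is to read the bound off directly from the Taylor approximation property stated in Definition~\ref{def: Holder space}, with the feature map being the vector of monomials up to degree $l$.

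First I would fix an anchor point $y \in \mathcal{B}_\Delta$, say the corner or center of the hypercube. Since $\mathcal{B}_\Delta$ is a hypercube of volume $\Delta$ in $[0,1]^d$, its side length is $\Delta^{1/d}$, so $\|x-y\|_\infty \le \Delta^{1/d}$ for every $x\in\mathcal{B}_\Delta$. Applying the Taylor inequality from Definition~\ref{def: Holder space} then gives
\[
\bigl|f(x) - T_y^{l}(x)\bigr| \;\le\; L\,\|x-y\|_\infty^{\alpha} \;\le\; L\,\Delta^{\alpha/d},
\qquad \forall x\in\mathcal{B}_\Delta,
\]
which is precisely the target error $\varepsilon = L\Delta^{\alpha/d}$.

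Next I would exhibit the linear representation. Since $T_y^{l}(x) = \sum_{|s|\le l}\frac{(x-y)^s}{s!}D^s f(y)$, I would define the feature map
\[
\phi(x) \;=\; \bigl((x-y)^{s}\bigr)_{|s|\le l}
\qquad\text{and}\qquad
\theta^* \;=\; \Bigl(\tfrac{D^s f(y)}{s!}\Bigr)_{|s|\le l},
\]
so that $\langle \theta^*,\phi(x)\rangle = T_y^{l}(x)$ in the sense of the paper's shorthand (with $s=0$ playing the role of the intercept). Uniqueness of $\theta^*$ fails only because one could re-expand the polynomial around a different basepoint or use monomials in $x$ directly; this matches the lemma's disclaimer.

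Finally I would count coordinates. The number of multi-indices with $|s|\le l$ in dimension $d$ is $\binom{d+l}{l}$. For $\alpha\le 2$ we have $l\le 1$, so $\phi$ reduces to $(1,x_1,\ldots,x_d)$ and $\lineardim=d$ under the paper's convention that the constant term is absorbed into $\theta_0$. For $\alpha>2$ we have $l\ge 2$ and $\lineardim=\binom{d+l}{l}=O(d^l)$, matching the stated bound. The only step that requires minor care is the bookkeeping between the monomial basis and the paper's affine shorthand $\langle\theta,x\rangle=\theta_0+\sum_i\theta_i x_i$; beyond that, the proof is essentially a one-line consequence of the Hölder--Taylor bound and the geometry of the hypercube, so I do not anticipate any serious obstacle.
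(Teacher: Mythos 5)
Your proposal is correct and follows essentially the same route as the paper's proof: bound $\lVert x-y\rVert_\infty$ by the side length $\Delta^{1/d}$ of the hypercube, invoke the Taylor approximation property from Definition~\ref{def: Holder space} to get the error $L\Delta^{\alpha/d}$, and read off $\theta^*$ and the monomial feature map from the Taylor polynomial, with the same $\bigo(\xdim^l)$ count of multi-indices. The only cosmetic differences are that the paper uses monomials $x^s$ rather than $(x-y)^s$ and defines $\theta^*$ as the sup-norm minimizer (for which the Taylor coefficients serve as a witness), neither of which changes the argument.
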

The proof is in Appendix section~\ref{proof of lemma local bias}. 
In the following parts of this section we first present the misspecified bandit algorithm to run inside a bin, and then the Meta-algorithms to control these local algorithms.
\subsection{The Misspecified Linear Bandit Algorithm}\label{sec: misspecified linear bandit}
In this subsection we escape from the big picture briefly in order to present the misspecified linear bandit algorithm, modified from the \textit{ConfidenceBall}$_2$ algorithm in~\cite{dani2008stochastic} to serve as ``arms'' of the Meta algorithm. The algorithm, as shown in its name, is based on construction of confidence ellipsoid of the unobserved linear parameter {in dimension $\xdim$}. 
We prove that the proposed modification can accommodate bias in the function feedback by deriving
an upper bound on the cumulative regret\footnote{For clarity this use of $\tilde\bigo$ omits $\ln(T)$ and $\delta$ dependence.} of $\tilde{\bigo}({d}\sqrt{T}+{d}T\epsilon )$. Here $\epsilon$ is the upper bound on bias value and known by the algorithm. 
We recently discovered that a  similar result with proof sketch already appeared in recent work of~\cite{lattimore2019learning} (appendix E) who used modification of the algorithm in~\cite{abbasi2011improved}, and hence enjoys the improvement of a multiplicative factor $\sqrt{\log(T)}$. For completeness and to provide necessary intermediate results for Meta-algorithms in later sections, we present our algorithm and full proof as complementary. It is worth mentioning that without the modification, the original algorithm incurs suboptimal regret under misspecification. 
\paragraph{Assumptions} We make the following assumptions for the misspecified model. Note that they are consistent with the aforementioned global assumptions. 
\begin{description}
\item[A1.]\label{A1: model}
The feedback model is $y = \la x, \theta^* \ra + b(x) + \eta$ with $\vert b(x) \vert\leq \epsilon, \forall x\in \Xcal \in {R^{\xdim}}$. 
\item[A2.]\label{A2: mean reward value}
The mean reward $\EE[y]$ is bounded by $[-1,1]$. 
\item[A3.]\label{A3: bounded noise}
The noise $\eta$ is drawn from zero-mean sub-gaussian with parameter $\sigma$\footnote{Different from~\cite{dani2008stochastic} who assumes bounded noise. This reflects in the difference in $\beta_t$.}. 
\end{description}

The pseudo-code of the modified algorithm is shown in Algorithm~\ref{alg: misspecified linear UCB}. The goal is to minimize the cumulative pseudo-regret of the linear model:
\begin{equation}
R(T) = \sum_{t=1}^T r_t =\sum_{t=1}^T(\la x^* ,\theta^*\ra - \la x_t ,\theta^*\ra).
\end{equation}
We prove that this regret is $\bigo\left(\xdim\ln(T)\sqrt{{\ln(\frac{T^2}{\delta})} T} + \epsilon T d\sqrt{2\ln(T)}\right)$ with probability $1-\delta$. This is formally stated in Theorem~\ref{thm: main regret}. 

\begin{algorithm}[h]
    \caption{Misspecified linear UCB algorithm ($\Acal^{local}$)}
    \begin{algorithmic}[1]\label{alg: misspecified linear UCB}
        \REQUIRE Misspecification error $\epsilon$, {input domain $\xdomain$ and its dimension $\xdim$}, fail probability $\delta$, upper bound on $\norm{x}^2$: $\kappa^2=d$. 
        \STATE Initialize $A_1 = I_d$ and $x_1 \in {\xdomain}$ randomly. 
        \FOR{$t = 1\dots $}
            \STATE Execute action $x_t$ and observe reward $y_t$ 
            \STATE $A_{t+1} = A_t + x_t x_t^T$
            \STATE $\hat\theta_{t+1} = A_{t+1}^{-1}(\sum_{\tau=1}^t y_\tau x_\tau)$
            \STATE $\beta_{t+1} = 128\sigma^2d\ln(1+ t)\ln(\frac{4(t+1)^2}{\delta})$
            \STATE Define function $UCB_{t+1}(x)=\left(\la x, \hat\theta_{t+1} \ra + \sqrt{\beta_{t+1}} \lVert A_{t+1}^{-1/2} x \rVert + \epsilon \sum_{s=1}^{t} \lvert x^TA_{t+1}^{-1}x_s \rvert\right)$
            \STATE Compute action $x_{t+1} = \argmax_{x\in \Xcal} UCB_{t+1}(x)$ 
            \STATE Return $x_{t+1}$ and $UCB_{t+1}(x_{t+1})$
        \ENDFOR
    \end{algorithmic}
\end{algorithm}

\begin{theorem}\label{thm: main regret}
   If assumptions A1$\sim$A3 hold, then with probability $1-\delta$, the cumulative regret of Algorithm~\ref{alg: misspecified linear UCB} is upper bounded by:
    \begin{align}
        R(T) &\leq \sqrt{8d\beta_T T\ln(1 + T)} + 2\epsilon T d \sqrt{2\ln(1 + T)} + 2\epsilon T. 
    \end{align}
\end{theorem}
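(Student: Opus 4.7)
The plan is to follow the standard optimism-in-the-face-of-uncertainty template from \cite{dani2008stochastic}, but to propagate the deterministic bias $b(x)$ carefully through both the confidence-ellipsoid construction and the per-round regret decomposition, so that it contributes exactly the explicit correction $\epsilon\sum_{s}|x^{T}A_{t+1}^{-1}x_{s}|$ that appears in $UCB_{t+1}(x)$.

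\textbf{Step 1 (misspecified confidence bound).} Expanding $\hat\theta_{t+1}=A_{t+1}^{-1}\sum_{\tau\le t}x_{\tau}y_{\tau}$ with $y_{\tau}=\langle x_{\tau},\theta^{*}\rangle+b(x_{\tau})+\eta_{\tau}$ and using $A_{t+1}=I+\sum_{\tau}x_{\tau}x_{\tau}^{T}$ yields the decomposition
\begin{equation*}
\hat\theta_{t+1}-\theta^{*} \;=\; -A_{t+1}^{-1}\theta^{*} \;+\; A_{t+1}^{-1}\sum_{\tau\le t} x_{\tau} b(x_{\tau}) \;+\; A_{t+1}^{-1}\sum_{\tau\le t} x_{\tau}\eta_{\tau}.
\end{equation*}
For any test point $x$, I would bound $|\langle x,\hat\theta_{t+1}-\theta^{*}\rangle|$ termwise: the noise contribution is controlled by a self-normalized sub-Gaussian martingale inequality, which yields $|x^{T}A_{t+1}^{-1}\sum_{\tau}x_{\tau}\eta_{\tau}|\le\sqrt{\beta_{t+1}}\|A_{t+1}^{-1/2}x\|$ with probability at least $1-\delta$ uniformly in $t$ and $x$ for the explicit $\beta_{t+1}$ in the algorithm; the bias contribution is bounded \emph{deterministically} by $|x^{T}A_{t+1}^{-1}\sum_{\tau}x_{\tau}b(x_{\tau})|\le\epsilon\sum_{\tau\le t}|x^{T}A_{t+1}^{-1}x_{\tau}|$, which matches the explicit correction in $UCB_{t+1}$; and the residual regularization piece $|x^{T}A_{t+1}^{-1}\theta^{*}|$ is absorbed as a lower-order term (this is ultimately the source of the additive $2\epsilon T$ in the statement). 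On this good event, $UCB_{t+1}(x)\ge\langle x,\theta^{*}\rangle$ for every $x$ and $t$.

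\textbf{Step 2 (optimism and per-step regret).} Optimism gives $\langle x^{*},\theta^{*}\rangle\le UCB_{t+1}(x^{*})\le UCB_{t+1}(x_{t+1})$, and the mirror lower bound on $\langle x_{t+1},\theta^{*}\rangle$ from Step~1 yields
\begin{equation*}
r_{t+1}\;\le\;2\sqrt{\beta_{t+1}}\,\|A_{t+1}^{-1/2}x_{t+1}\|\;+\;2\epsilon\sum_{\tau=1}^{t}|x_{t+1}^{T}A_{t+1}^{-1}x_{\tau}|.
\end{equation*}
The first sum over $t$ is then handled by the classical elliptic potential argument, $\sum_{t\le T}\|A_{t}^{-1/2}x_{t}\|^{2}\le 2d\ln(1+T)$, followed by Cauchy--Schwarz in $t$ and monotonicity of $\beta_{t}$, producing the $\sqrt{8d\beta_{T}T\ln(1+T)}$ term.

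\textbf{Step 3 (the new bias sum).} The non-standard piece is $S_{T}=\sum_{t\le T}\sum_{\tau<t}|x_{t}^{T}A_{t}^{-1}x_{\tau}|$. I would Cauchy--Schwarz inside the inner sum, $\sum_{\tau<t}|x_{t}^{T}A_{t}^{-1}x_{\tau}|\le\|A_{t}^{-1/2}x_{t}\|\sqrt{(t-1)\sum_{\tau<t}\|A_{t}^{-1/2}x_{\tau}\|^{2}}$, and then invoke the key trace identity $\sum_{\tau<t}\|A_{t}^{-1/2}x_{\tau}\|^{2}=\mathrm{tr}(A_{t}^{-1}(A_{t}-I))=d-\mathrm{tr}(A_{t}^{-1})\le d$. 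Summing in $t$ with another Cauchy--Schwarz and the potential bound gives $S_{T}\le\sqrt{Td}\cdot\sqrt{T\cdot 2d\ln(1+T)}=Td\sqrt{2\ln(1+T)}$, and multiplying by $2\epsilon$ produces exactly the second term in the theorem; the leftover $2\epsilon T$ absorbs the regularization term from Step~1.

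\textbf{Main obstacle.} The crux of the argument is to avoid paying $\sqrt{T}$ extra from the bias. A naive bound via $\|\cdot\|_{A^{-1}}$-Cauchy--Schwarz on $A_{t+1}^{-1}\sum_{\tau}x_{\tau}b(x_{\tau})$ would correlate the bias directions and inflate the per-step width to $\epsilon\sqrt{t}\|A_{t}^{-1/2}x_{t}\|$, forcing a $\tilde{O}(\epsilon T\sqrt{Td})$ accumulation. The fix is to keep the bias inside the sum as absolute values, yielding the per-point correction $\epsilon\sum_{\tau}|x^{T}A^{-1}x_{\tau}|$ actually used in the UCB, and then to invoke the trace identity $\sum_{\tau}\|A_{t}^{-1/2}x_{\tau}\|^{2}\le d$ exactly once. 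Matching these two ingredients so that $S_{T}$ grows like $Td\sqrt{\ln T}$ rather than $T^{3/2}$ is what drives the bound and is the only step in the argument that genuinely departs from the classical Dani--Hayes--Kakade proof.
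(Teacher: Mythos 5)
Your proposal follows the same architecture as the paper's proof: a confidence set whose width carries the deterministic bias correction $\epsilon\sum_{\tau}|x^{T}A_{t}^{-1}x_{\tau}|$, optimism to reduce per-step regret to twice the confidence width, the elliptic potential lemma for the $\sqrt{\beta_T}$ term, and Cauchy--Schwarz for the bias sum. Your Step 1 takes a slightly different (and legitimate) route to the confidence set: you decompose $\hat\theta_{t+1}-\theta^{*}$ directly and invoke a self-normalized martingale bound, whereas the paper centers an ellipsoid at the noise-only estimator $\tilde\theta_{t}=\hat\theta_{t}-A_{t}^{-1}X_{t}^{T}b_{t}$ and runs the Dani--Hayes--Kakade induction on $Z_{t}=(\theta^{*}-\tilde\theta_{t})^{T}A_{t}(\theta^{*}-\tilde\theta_{t})$ adapted to sub-Gaussian noise; both yield the same UCB with the same deterministic bias term. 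One bookkeeping remark: the additive $2\epsilon T$ in the paper does not come from the regularization piece $A_{t+1}^{-1}\theta^{*}$ (which is absorbed into $\beta_{t}$ via the initial condition $Z_{1}\le d(1+\epsilon)^{2}$); it comes from truncating the instantaneous regret at $2+2\epsilon$, which is also what legitimizes using the truncated potential sum $\sum_{t}\bigl(1\wedge\|A_{t}^{-1/2}x_{t}\|^{2}\bigr)$.

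There is, however, a concrete quantitative gap in your Steps 2--3 concerning the $d$-dependence, which the theorem states explicitly (and which the paper comments on relative to Lattimore--Szepesvari). Under the paper's normalization $\|x\|^{2}\le\kappa^{2}=d$, the inequality $\sum_{t\le T}\|A_{t}^{-1/2}x_{t}\|^{2}\le 2d\ln(1+T)$ that you use twice is false in general: the potential lemma controls only $\sum_{t}\bigl(\|A_{t}^{-1/2}x_{t}\|^{2}\wedge 1\bigr)\le 2d\ln(1+T\kappa^{2}/d)$, and since $\|A_{t}^{-1/2}x_{t}\|^{2}$ can be as large as $\kappa^{2}=d$, the untruncated sum is only bounded by $2d^{2}\ln(1+T)$. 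Separately, in Step 3 your term-by-term Cauchy--Schwarz plus the trace identity gives the per-round bound $\sqrt{(t-1)d}\,\|A_{t}^{-1/2}x_{t}\|$, which is a factor $\sqrt{d}$ weaker than the paper's: the paper applies Cauchy--Schwarz over $\tau$ once and keeps $\sum_{\tau<t}(x_{t}^{T}A_{t}^{-1}x_{\tau})^{2}=x_{t}^{T}A_{t}^{-1}\bigl(\sum_{\tau<t}x_{\tau}x_{\tau}^{T}\bigr)A_{t}^{-1}x_{t}\le x_{t}^{T}A_{t}^{-1}x_{t}$ as a single quadratic form (using $\sum_{\tau<t}x_{\tau}x_{\tau}^{T}\preceq A_{t}$), yielding $\sqrt{t}\,\|A_{t}^{-1/2}x_{t}\|$ with no extra $d$. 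In your write-up these two slips cancel, so your claimed arithmetic lands on $Td\sqrt{2\ln(1+T)}$; but executed correctly, your decomposition gives $Td^{3/2}\sqrt{2\ln(1+T)}$ for the bias term (and an extra $\sqrt{d}$ in the first term as well). To recover the stated constants you need both the truncation at the trivial regret bound and the quadratic-form bound on $\sum_{\tau}(x_{t}^{T}A_{t}^{-1}x_{\tau})^{2}$; the $T$-dependence and the overall strategy are otherwise correct.
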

The first term is the standard stochastic linear bandit regret rate same as in~\cite{dani2008stochastic}. We defer the proof to Appendix section~\ref{sec: proof of thm: main regret}. The increment of a multiplicative factor $\sqrt{d}$ in the second term compared to that in~\cite{lattimore2019learning} is due to difference in assumption on $\lVert x\rVert^2$. Their assumption is $\lVert x\rVert^2\leq 1$ whereas ours is $\lVert x\rVert^2\leq d$.

\subsection{The UCB-Meta-algorithm}
We now present the first structure of our Meta-algorithms. We consider the most straightforward structure: UCB-Meta, the pseudo-code is shown in Algorithm~\ref{alg: meta algorithm} (define $\lfloor\cdot\rceil$ as the action of rounding to nearest integer). {We keep a version of the base mispecified linear bandit algorithm in each bin. The confidence estimates of the local linear models are passed to the Meta-algorithm as UCB of arms, with adjustment of $\epsilon$, the bias quantity. At round we choose the bin with the highest UCB and run one step of the local bandit algorithm to update its estimation.} {For adjusting to different values of $l = \floor{\alpha}$, we need only to change the space that the linear model is in, specifically the feature mapping $\phi: x\mapsto \phi(x) \in R^{\lineardim}$ as defined in proof of Lemma~\ref{lemma: local bias}. For example, when $\alpha\leq 2$, the sub-algorithms are misspecified linear bandits whose actions spaces are simply bins $B\in \xdomain$. } 
\begin{algorithm}[h]
    \caption{UCB-Meta-algorithm ($\Acal^{global}$) }
    \begin{algorithmic}[1]\label{alg: meta algorithm}
        \REQUIRE smoothness parameter $\alpha$, {H\"older constant $L$},  dimension of domain $\xdim$, time horizon $T$ and fail probability $\delta$, action space $\xdomain$.
        \STATE Initialize $n = \lfloor T^{\frac{\xdim}{\xdim+2\alpha}}/\ln(T)^{\frac{2\xdim}{\xdim+2\alpha}}\rceil$ and divide the action space $\xdomain$ into same-sized bins $B_{1\dots n}$ with volume $\Delta=1/n$. 
        \FOR{$k=1, \dots,n$}
        \STATE On bin $B_k$, start a version of local misspecified base-algorithm $\Acal_k$ using misspecification error $\epsilon =  Ln^{\frac{-\alpha}{\xdim}}$, {input domain $\lineardomain=\{\phi(x), x \in \xdomain\}$ and its dimension $\lineardim$}, fail probability $\delta/n$.
        \STATE Initialize counter $s_k = 1$ to indicate how many times $\Acal_k$ is queried. 
        \STATE Query $\Acal_k$ once by running steps 3-9 of Algorithm~\ref{alg: misspecified linear UCB} with $t=s_k$ and obtain upper confidence bound $UCB_k$. 
        \STATE $s_k \leftarrow s_k +1$
        \ENDFOR
        \FOR{$\tau = 1\dots T$} 
            \STATE Select the bin with index $k(\tau) = \argmax_{k} UCB_k$.
            \STATE Execute the local bandit algorithm $\Acal_{k(\tau)}$ once by running steps 3-9 (of Algorithm~\ref{alg: misspecified linear UCB}) with $t=s_{k(\tau)}$
            \STATE Receive updated recommendation {$\phi_\tau \in \{\phi(x), x\in B_{k(\tau)}\}$} and $UCB_{k(\tau)}$.
            \STATE Advance counter for $\Acal_{k(\tau)}$: $s_{k(\tau)} \leftarrow s_{k(\tau)}+1$.
        \ENDFOR
    \end{algorithmic}
\end{algorithm}
\subsubsection{Regret Analysis of Algorithm~\ref{alg: meta algorithm}}\label{sec: regret of meta algorithm}
\begin{theorem}\label{thm: meta main regret}
    Let $\lineardim$ be the dimension of polynomial of $x$, as defined in Lemma~\ref{lemma: local bias}. If the reward function satisfies G1$\sim$G3 in section~\ref{assumption: holder smooth}, then with probability $1-\delta$, the cumulative regret (equation~\ref{eq: def of pseudo-regret of function}) of UCB-Meta-Algorithm is upper bounded by\footnote{The $d$-dependence of the second term is propagated from Theorem~\ref{thm: main regret}}
    \begin{equation}\label{eq: meta regret}
        R(T)\leq \bigo\left(\lineardim \ln(T)\sqrt{Tn\ln(T^2 n/\delta)} + 
        \lineardim\epsilon T\sqrt{\ln(T)} \right).
    \end{equation}
\end{theorem}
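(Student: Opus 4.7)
The plan is to reduce the global bandit regret to two ingredients already in hand: the local misspecified linear regret bound of Theorem~\ref{thm: main regret}, and a standard UCB argument to control the bin-selection error. Let $k^*$ denote the bin containing $x^*$, and for each bin $k$ let $T_k(T)=\sum_{\tau\leq T}\mathbf{1}\{k(\tau)=k\}$ be its pull count and $f_k^*=\max_{x\in B_k}f(x)$. First I would apply Lemma~\ref{lemma: local bias} to each bin $B_k$ of volume $\Delta=1/n$: there exist $\theta_k^*$ and $\phi$ such that $\lvert f(x)-\langle\theta_k^*,\phi(x)\rangle\rvert\leq \epsilon=Ln^{-\alpha/d}$ for every $x\in B_k$. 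This places every bin in the misspecified linear model of Section~\ref{sec: misspecified linear bandit} with misspecification~$\epsilon$, and gives the key Taylor-bias inequality $\lvert f_k^*-\max_{x\in B_k}\langle\theta_k^*,\phi(x)\rangle\rvert\leq\epsilon$.

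Next, the core decomposition: for each round $\tau$ write
\begin{equation*}
f(x^*)-f(x_\tau) \;=\; \bigl[f(x^*)-UCB_{k(\tau)}\bigr] \;+\; \bigl[UCB_{k(\tau)}-f(x_\tau)\bigr].
\end{equation*}
For the first bracket, the greedy bin-selection rule gives $UCB_{k(\tau)}\geq UCB_{k^*}$. Because Algorithm~\ref{alg: misspecified linear UCB} already includes the bias-correction term $\epsilon\sum_s\lvert x^\top A^{-1}x_s\rvert$ in its UCB, the intermediate confidence lemmas used to prove Theorem~\ref{thm: main regret} yield, with probability $1-\delta/n$, that $UCB_{k^*}\geq \max_{x\in B_{k^*}}\langle\theta_{k^*}^*,\phi(x)\rangle\geq f(x^*)-\epsilon$; hence the first bracket is at most $\epsilon$ under the good event. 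The second bracket is precisely the per-step linear-UCB ``regret vs.~own UCB'' quantity whose telescoping sum is controlled inside the proof of Theorem~\ref{thm: main regret}. Summing it over the $T_k(T)$ rounds where bin $k$ was selected therefore gives, with probability $1-\delta/n$,
\begin{equation*}
\sum_{\tau:\,k(\tau)=k}\!\bigl[UCB_{k}-f(x_\tau)\bigr]\;\leq\;C\,\tilde d\,\ln(T)\sqrt{T_k(T)\ln(T^2n/\delta)}\;+\;C\,\tilde d\,\epsilon\,T_k(T)\sqrt{\ln(T)}.
\end{equation*}

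Then I would aggregate over bins. A union bound over $k\in[n]$ replaces $\delta$ by $\delta/n$ in each local invocation and accounts for the probability that any local confidence event fails. Summing the bias-selection contribution gives at most $\epsilon T$, and summing the local-UCB gaps with Cauchy--Schwarz under the constraint $\sum_k T_k(T)=T$ yields $\sum_k\sqrt{T_k(T)}\leq\sqrt{nT}$. This delivers the two claimed terms $\tilde d\ln(T)\sqrt{Tn\ln(T^2n/\delta)}$ and $\tilde d\epsilon T\sqrt{\ln T}$, absorbing the additional $\epsilon T$ from the bin-selection step into the second term since $\tilde d\geq 1$.

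The main obstacle I expect is the careful bookkeeping between the UCB that Algorithm~\ref{alg: misspecified linear UCB} produces (a confidence bound on $\langle\theta_k^*,\phi(\cdot)\rangle$) and the quantity the Meta-algorithm really wants to maximize ($f_k^*$); the gap is exactly $\epsilon$ per bin, but its propagation through the good-event arguments requires reusing, not just citing, the concentration lemmas behind Theorem~\ref{thm: main regret}. A secondary subtlety is that each $T_k(T)$ is a stopping-time-like random variable determined by the Meta-algorithm's trajectory, so the high-probability bound from Theorem~\ref{thm: main regret} must be invoked at the random horizon $T_k(T)$; this is handled by the fact that the per-step confidence sets are constructed to be valid for all $t\leq T$ simultaneously (already inside Theorem~\ref{thm: main regret}), after which the union bound over $k$ closes the argument.
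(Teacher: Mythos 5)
Your proposal is correct and follows essentially the same route as the paper's proof: the same good-event/union-bound construction with failure probability $\delta/n$ per bin, the same decomposition $f(x^*)-f(x_\tau)=[f(x^*)-UCB_{k(\tau)}]+[UCB_{k(\tau)}-f(x_\tau)]$ controlled via the greedy selection rule and the honesty of the bias-adjusted local confidence bounds, and the same Cauchy--Schwarz aggregation $\sum_k\sqrt{N^k(T)}\leq\sqrt{nT}$ of the per-bin elliptical-potential sums. Your explicit remarks on the $\epsilon$ gap between the linear UCB and $f_k^*$ and on the anytime validity at the random pull counts $N^k(T)$ are exactly the points the paper's proof handles.
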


The core of the proof is the distribution-independent analysis of UCB, which relies on the honesty of the confidence bands as well as their lengths. In particular, if the function value $f(x)$ at time $t$ is contained in an honest confidence band $[UCB_t(x)-2l_t(x), UCB_t(x)]$, then we can use the length $l_t(x)$ to bound instantaneous regret incurred by the selected action at this step. The confidence ellipsoids for the piecewise linear parameters $\hat\theta_{k, t}$ that are constructed by local misspecified linear bandits offer a convenient confidence estimation of function value, with the additional adjustment factor $\epsilon$, the approximation error. The full proof is deferred to Appendix section~\ref{sec: proof of thm: meta main regret}. 
The algorithm defines each bin to be a hypercube with volumn $\Delta=1/n$, according to Lemma~\ref{lemma: local bias} we have $\epsilon = Ln^\frac{-\alpha}{d}$. Therefore, setting $n =\bigo( T^{\frac{\xdim}{\xdim+2\alpha}}/\ln(T)^{\frac{2\xdim}{d+2\alpha}})$ will minimize the upper bound and yield cumulative regret bound of \footnote{$\delta$-dependence absorbed in $\bigo$ since they are inside $\log$ terms.}
\begin{equation}\label{eq: final rate for UCB-meta}
{R(T)\leq\tilde{\bigo}({\lineardim}T^\frac{\xdim+\alpha}{\xdim+2\alpha}).}
\end{equation}

\subsubsection{Anytime Regret Guarantee for Algorithm~\ref{alg: meta algorithm}}\label{sec: doubling for meta-algorithm}
To achieve the rate in bound~\ref{eq: final rate for UCB-meta}, Algorithm~\ref{alg: meta algorithm} needs to know the time horizon $T$ in advance to set $n$ and $\epsilon$ correspondingly. Here we prove that, with the doubling trick (\cite{auer1995gambling}) , the UCB-Meta-algorithm can get regret that is of the same rate as in bound~\ref{eq: final rate for UCB-meta} up to constant factors without knowing $T$. This result is needed in the adaptation problem studied in section~\ref{sec: adaptivity to smoothness}. 
\begin{theorem}\label{thm: doubling for meta-algorithm}
    If 
    Algorithm~\ref{alg: meta algorithm} with access to the time horizon $T$ achieves regret of $\tilde{\bigo}(T^a)$ with probability $1-\delta$, then the procedure described in Algorithm~\ref{alg: doubling trick for meta} can achieve regret rate $\tilde{\bigo}(T^a)$ with probability $1-\delta$ without the knowledge of $T$. 
\end{theorem}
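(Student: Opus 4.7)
The plan is a standard doubling trick argument, whose only subtlety here is juggling the failure probability across the growing sequence of epochs. I would partition the time axis into epochs $E_1, E_2, \ldots$ of lengths $|E_i| = 2^i$ (so epoch $E_i$ runs from round $2^i$ to $2^{i+1}-1$). In each epoch $E_i$, I restart a fresh instance of Algorithm~\ref{alg: meta algorithm} initialized with horizon parameter $T_i = 2^i$ and failure probability $\delta_i$ chosen below. By Theorem~\ref{thm: meta main regret}, with probability $1-\delta_i$ the regret incurred by the instance running inside epoch $E_i$ is at most $\tilde{O}(T_i^a) = \tilde{O}(2^{i a})$, where the logarithmic factors in the $\tilde{O}$ depend on $T_i$ and $\delta_i$.

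Next, I would control the overall failure probability by choosing $\delta_i = \delta / (2 i^2)$ (or equivalently $\delta/2^{i+1}$). A union bound over all epochs up to the actual horizon $T$ ensures that with probability at least $1 - \sum_{i\geq 1} \delta_i \geq 1 - \delta$, the good-event bound from Theorem~\ref{thm: meta main regret} holds simultaneously in every epoch. The $\ln(1/\delta_i)$ factor appearing inside $\tilde{O}(T_i^a)$ becomes $O(\log i) = O(\log\log T_i)$, which is absorbed into the $\tilde{O}$.

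To finish, let $N(T) = \lceil \log_2 T \rceil$ be the index of the last epoch that intersects $[1, T]$. Then, on the good event, the total regret is bounded by
\begin{equation*}
R(T) \leq \sum_{i=1}^{N(T)} \tilde{O}\bigl((2^i)^a\bigr) = \tilde{O}\Bigl(\sum_{i=1}^{N(T)} 2^{i a}\Bigr) = \tilde{O}\bigl(2^{a N(T)}\bigr) = \tilde{O}(T^a),
\end{equation*}
where the geometric sum is dominated by its last term because $a > 0$ (in fact $a = (d+\alpha)/(d+2\alpha) < 1$, but only positivity of $a$ is needed for the geometric sum to telescope to a constant multiple of the final term). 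Thus the doubling procedure preserves the rate $\tilde{O}(T^a)$ while removing the need to know $T$ in advance.

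The main conceptual issue to watch out for is not the algebra of the geometric sum but the bookkeeping of $\delta$: because the high-probability bound of Theorem~\ref{thm: meta main regret} is not anytime within a single run, I must restart the Meta-algorithm at each epoch rather than continuing a single instance, and the choice $\delta_i = \delta/(2i^2)$ must be summable so that a union bound over the unbounded sequence of epochs still gives overall failure probability at most $\delta$. Since the restart discards the information accumulated in earlier epochs, one should note that no additional $O(\log T)$ multiplicative loss appears beyond what is already hidden in the $\tilde{O}$ of the per-epoch bound.
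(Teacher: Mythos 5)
Your proposal is correct and follows essentially the same route as the paper's proof: restart the Meta-algorithm in epochs of length $2^i$ with per-epoch failure probability scaling as $\delta/i^2$ (the paper uses $6\delta/\pi^2 i^2$), take a union bound over epochs, and bound the total regret by the geometric sum $\sum_i 2^{ia} = \tilde{\bigo}(T^a)$. Your additional remark that the $\ln(1/\delta_i)$ inflation is absorbed into the $\tilde{\bigo}$ matches the paper's own observation that $\delta$ enters only through logarithmic factors.
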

The pseudo-code for Algorithm~\ref{alg: doubling trick for meta} is in Appendix section~\ref{sec: doubling for meta algorithm} and the proof of Theorem~\ref{thm: doubling for meta-algorithm} in Appendix~\ref{sec: proof of doubling thm}.

\subsection{The Corral-Meta-algorithm}
Another choice for Meta-algorithm is bandit model selection methods. Here we use the Corral algorithm defined in~\cite{pacchiano2020model}, which will be introduced more formally in section 4. An example of corralling misspecified linear bandit algorithms without corruption to the regret rate apart from log factors has already been given in~\cite{pacchiano2020model}, but for adaptation to the misspecification error $\epsilon$. Here we demonstrate that it can also be used to corral misspecified bandit base-algorithms on different bins in a  discretized domain. We derive the following regret bound that is the same as UCB-Meta-algorithm.
\begin{theorem}\label{thm: regret of smoothcorral-meta}
    First perform the smoothing transformation (Algorithm~3 in~\cite{pacchiano2020model}) to our misspecified linear bandits in Algorithm~\ref{alg: misspecified linear UCB}, denote the smoothed misspecified linear bandits as $\Acal_s^{local}$. Then, the Meta-algorithm (Algorithm~5 (Corral-Update) reproduced in~\cite{pacchiano2020model}) applied with a set of $\Acal_s^{local}$ that are initialized in the same way as in Algorithm~\ref{alg: meta algorithm} has expected regret upper bounded by:
    \begin{equation}
        \EE[R(T)]\leq\tilde\bigo(\lineardim T^{\frac{\xdim+\alpha}{\xdim+2\alpha}}).
    \end{equation}
\end{theorem}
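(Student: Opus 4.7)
The plan is to reduce the theorem to an instance of the general Corral regret guarantee of~\cite{pacchiano2020model}, treating each bin's smoothed misspecified linear bandit as a base algorithm. First, I would invoke Lemma~\ref{lemma: local bias} to guarantee that on every bin $B_k$ of volume $\Delta=1/n$ the function $f$ obeys a linear-in-$\phi(x)$ model with bias $\epsilon=Ln^{-\alpha/d}$, so that assumptions A1--A3 of Section~\ref{sec: misspecified linear bandit} are satisfied within every bin. The base algorithm $\Acal^{local}_s$ obtained by applying the smoothing transformation (Algorithm 3 of~\cite{pacchiano2020model}) to Algorithm~\ref{alg: misspecified linear UCB} therefore inherits, on each bin, the high-probability regret bound of Theorem~\ref{thm: main regret}, namely $\tilde{\bigo}(\lineardim\sqrt{T_k}+\lineardim\epsilon T_k)$ after $T_k$ local calls, up to the standard logarithmic overhead introduced by smoothing.

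Next I would identify the distinguished base algorithm: let $k^*$ be the bin containing $x^*$. Since $x^*\in B_{k^*}$ itself, the local maximizer of $f$ on $B_{k^*}$ is exactly $x^*$, so the misspecified linear bandit on bin $B_{k^*}$, run in isolation for $T_{k^*}$ rounds, incurs regret against $f(x^*)$ of at most $U(T_{k^*}):=\tilde{\bigo}(\lineardim\sqrt{T_{k^*}}+\lineardim\epsilon T_{k^*})$. Plugging the smoothed base algorithms and this $U$ into the Corral-Update theorem (Theorem~5 of~\cite{pacchiano2020model}) with $M=n$ base algorithms and learning rate $\eta$ yields
\begin{equation*}
\EE[R(T)]\ \leq\ \tilde{\bigo}\!\left(\frac{n}{\eta}+\eta T + \EE[U(T_{k^*})]\right)
\ \leq\ \tilde{\bigo}\!\left(\frac{n}{\eta}+\eta T + \lineardim\sqrt{T}+\lineardim\epsilon T\right),
\end{equation*}
where I use $\EE[\sqrt{T_{k^*}}]\leq\sqrt{T}$ and $\EE[T_{k^*}]\leq T$.

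Setting $\eta=\sqrt{n/T}$ balances the first two terms into $\tilde{\bigo}(\sqrt{nT})$, so the bound becomes $\tilde{\bigo}(\sqrt{nT}+\lineardim\epsilon T)=\tilde{\bigo}(\sqrt{nT}+\lineardim L n^{-\alpha/d}T)$. Choosing $n=\lfloor T^{d/(d+2\alpha)}/\ln(T)^{2d/(d+2\alpha)}\rceil$ as in Algorithm~\ref{alg: meta algorithm} equates the two terms and delivers the advertised rate $\tilde{\bigo}(\lineardim T^{(d+\alpha)/(d+2\alpha)})$.

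The main obstacle is the interface step: verifying that smoothing Algorithm~\ref{alg: misspecified linear UCB} produces a base learner satisfying the stability/putative regret conditions required by Corral-Update even in the presence of a \emph{linear-in-$t$} misspecification term $\lineardim\epsilon T_k$. The $\epsilon T_k$ contribution is not a standard $T_k^{\rho}$ rate with $\rho<1$, so one must verify that the smoothed regret remains $\bigo(U(T_k))$ when the underlying bound has both a concave $\sqrt{T_k}$ part and a linear $\epsilon T_k$ part, and that Corral's argument, which only exploits the concave envelope of $U$, still gives the final $\sqrt{nT}+\lineardim\epsilon T$ decomposition. Once this is established, the remainder is bookkeeping: counting log factors from smoothing and Corral, and confirming that the approximation-error term $\lineardim\epsilon T$ never exceeds $\sqrt{nT}$ at our choice of $n$.
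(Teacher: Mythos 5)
Your overall strategy is the paper's: designate the bin containing $x^*$ as the distinguished base, feed its high-probability regret bound from Theorem~\ref{thm: main regret} into the Corral guarantee of \cite{pacchiano2020model}, and then optimize $n$ and $\eta$. However, there is a concrete gap in the middle step. The Corral guarantee is not of the form $\tilde\bigo(n/\eta+\eta T+\EE[U(T_{k^*})])$ with $\EE[\sqrt{T_{k^*}}]\le\sqrt T$; it is
\begin{equation*}
R(T)\le \bigo\Bigl(\tfrac{M\ln T}{\eta}+T\eta\Bigr)+\delta T+8\sqrt{MT\log\tfrac{4TM}{\delta}}-\EE\Bigl[\tfrac{\rho}{40\eta\ln T}-2\rho\,U(T/\rho,\delta)\log T\Bigr],
\end{equation*}
where $\rho$ is the reciprocal of the smallest probability the master assigns to the distinguished base. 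Plugging $U(t,\delta)=\tilde\bigo(\lineardim\sqrt t+\lineardim\epsilon t)$ into $\rho\,U(T/\rho,\delta)$ gives $\tilde\bigo(\lineardim\sqrt{T\rho}+\lineardim\epsilon T)$: the linear misspecification part is invariant under this substitution (so the ``linear-in-$t$'' worry you flag at the end is actually the benign part), but the concave $\sqrt t$ part is inflated by $\sqrt\rho$ and must be traded off against the stabilizing term $\rho/(40\eta\ln T)$. Maximizing $\lineardim\sqrt{T\rho}-\rho/\eta$ over $\rho$ yields $\rho=\tilde\bigo(\eta^2\lineardim^2T)$ and hence an additional $\eta\lineardim^2T$ term in the bound, which your decomposition omits entirely.

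This omission is not cosmetic: with the extra term present, your choice $\eta=\sqrt{n/T}$ gives $\eta\lineardim^2T=\lineardim^2\sqrt{nT}=\lineardim^2T^{\frac{d+\alpha}{d+2\alpha}}$, which exceeds the claimed $\lineardim T^{\frac{d+\alpha}{d+2\alpha}}$ by a factor of $\lineardim$. The paper instead sets $\eta=\lineardim^{-1}\sqrt{n/T}$, which balances $n/\eta$ against $\eta\lineardim^2T$ at $\lineardim\sqrt{nT}$ and recovers the stated dimension dependence. You correctly identified the interface between the base bound and Corral as the main obstacle, but the resolution requires carrying the $\rho\,U(T/\rho)$ structure through explicitly rather than replacing it with $\EE[U(T_{k^*})]$; once that is done, the rest of your optimization over $n$ goes through as you describe.
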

The proof of this theorem is in Appendix section~\ref{sec: proof of thm: regret of smoothcorral-meta}. 

\subsection{Discussion} 
The role of the Meta-algorithm is essentially model selection and adaptation to the base-algorithms. It is not a trivial task since the rewards incurred by the base-algrotihms are not i.i.d as in standard stochastic settings. However, UCB as a stochastic multi-armed bandit algorithm, is applicable as Meta-algorithm because the local parametric (linear) function approximations provide honest upper confidence bounds for each bin even under the misspecifications, thus enabling the distribution-independent analysis for UCB. 
The advantage of Corral-Meta is that it potentially allows relaxation of the H\"older smoothness to hold only around the global maxima (\cite{auer2007improved, bubeck2010x}), while the same relaxation is not straightforward for UCB-Meta. The advantage of UCB is that under standard stochastic settings where each arm has i.i.d rewards, it achieves the gap-dependent bound of $\bigo(\log(T)/\Delta)$. 
Thus an interesting question for the future is whether similar gap-dependent bounds for the UCB-Meta is available. Such bounds would enable exploitation of the growth conditions (section 2) for potential rate improvements.

\subsection{Comparison with Existing Lower Bound}
We compare the derived upper bounds of $\tilde\bigo(\lineardim T^\frac{d+\alpha}{d+2\alpha})$ to the existing lower bound from~\cite{wang2018optimization}, which study global optimization. In their work, the performance of optimization algorithms with output $\hat x_T$ is measured by simple regret 
$\mathcal{L}( \hat x_T; f) \stackrel{\triangle}{=} f(x^*) - f(\hat x_T)$, for $f$ in H\"older spaces including $\alpha\geq 1$. 
Theorem 2 (coupled with Proposition 3) in~\cite{wang2018optimization} implies that $ \sup_{f\in \sum(\alpha)}\EE[\mathcal{L}(\hat x_T; f)] = \Omega(T^\frac{-\alpha}{2\alpha+\xdim})$.
We argue that this lower bound can be directly used to lower bound the worst-case cumulative regret, by making the following observation (remark 3 in~\cite{bubeck2010x}): If a strategy achieves expected cumulative regret $\EE[R_T]$, then by uniformly selecting a past action as the final output $\hat x_T$, it can also achieve expected simple regret $\EE[\mathcal{L}(\hat x_T; f)] = \EE[R_T]/T$. 
Therefore, any strategy with cumulative regret $\tilde{o}(T \EE[\mathcal{L}(\hat x_T; f)])$ will violate the lower bound. Through proof by contradiction, we take the result from~\cite{wang2018optimization} as an $\Omega(T^\frac{\xdim+\alpha}{\xdim+2\alpha})$ lower bound on expected cumulative regret, and argue that our results match this bound up to log factors. Our results show that proposed algorithms are minimax optimal {in dependence of $T$} and effectively exploit the function smoothness. 
\section{Adaptation to Unknown Smoothness}\label{sec: adaptivity to smoothness}
In this section, we study adaptation to the smoothness exponent $\alpha$ of the reward function. Minimax adaptation, which means a learner can simultaneously achieve the minimax optimal rates (\cite{hoffmann2011adaptive,locatelli2018adaptivity}) under a nested set of H\"older spaces, has been proven to be impossible for cumulative regret minimization without additional assumptions. \cite{locatelli2018adaptivity} provide a lower bound for adaptation between two H\"older continuous functions spaces.
Assume $\alpha<\gamma\leq 1$, for any strategy with a good expected regret $\EE[R_\gamma(T)$ in $\sum(\gamma, L)]$, they show that its expected regret in the superset $\sum(\alpha, L)$ will depend inversely on $\EE[R_\gamma(T)]$, and therefore be suboptimal for $\sum(\alpha, L)$. They propose a strategy to match that lower bound that requires values of $\alpha$ and $\gamma$, thereby also proving that the lower bound is tight. 

However, when adapting to a continuous scale of H\"older spaces (possibly $\alpha\geq 1$), it remains unclear what strategy can generalize and achieve this lower bound for some H\"older spaces. We aim to answer that question by proposing a new strategy that uses a recently developed bandit model selection algorithm (Corral with smooth wrapper in~\cite{pacchiano2020model}) applied with a set of Meta-algorithms (section~\ref{sec: meta algorithm}). We will present this strategy and its theoretical guarantees next. Throughout the following sections, we refer to minimax optimal in dependence of T as minimax unless otherwise~specified. 

\subsection{Corral Applied with Meta-algorithms}
The bandit model selection method Corral is first developed by~\cite{agarwal2016corralling} and based on an instance of online mirror descent with mirror map derived from~\cite{foster2016learning}. Corral with smooth wrapper proposed by~\cite{pacchiano2020model} for stochastic feedback problems is different from the original Corral algorithm in the following aspects. The smoothed version no longer needs to send importance-weighted feedback to base-algorithm, therefore no longer requires the base-algorithms themselves to be modified for stability guarantee (definition 3 in~\cite{agarwal2016corralling}). In the following parts, we will use Corral with smooth wrapper to adapt to the smoothness and refer to it as Corral for simplicity\footnote{Since the core of oneline mirror descent in Corral is not changed.}. {A copy of the pseudo-code of Corral from~\cite{pacchiano2020model} can be found in Appendix~\ref{sec: corral algo reference} for easier reference}. 
We use a set of $M$ Meta-algorithms $\Acal^{global}(\alpha_i), i\in [M]$ in Algorithm~\ref{alg: meta algorithm} as bases. The input values $\alpha_i$ are from a grid $\mathcal{G}$ defined later. 
Therefore, we first specify the regret of a Meta-algorithm with input smoothness parameter $\alpha'$ that is ran on functions with actual H\"older smoothness $\alpha$.
\begin{lemma}\label{lemma: regret of meta with gamma}
For function $f$ that satisfies global assumptions $G1\sim G3$ with parameter $\alpha$, the regret of Algorithm~\ref{alg: meta algorithm} with input parameter $\alpha'\leq \alpha$ is bounded with probability $1-\delta$ by 
\begin{equation}
R(T)\leq \tilde\bigo(d(\alpha') T^{\frac{\xdim+\alpha'}{\xdim+ 2\alpha'}}).
\end{equation}
The bound does not hold for $\alpha' > \alpha$. 
\end{lemma}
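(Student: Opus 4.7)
The plan is to derive this as a corollary of Theorem~\ref{thm: meta main regret} via the standard embedding of H\"older classes on a bounded domain: $\Sigma(\alpha,L)\subseteq \Sigma(\alpha',L')$ on $\Xcal=[0,1]^\xdim$ whenever $\alpha'\leq \alpha$, for some $L'=L'(L,\alpha,\alpha',\xdim)$. Since $L'$ depends only on fixed parameters, it is absorbed into the $\tilde\bigo$-notation, and the invocation of Theorem~\ref{thm: meta main regret} with the effective parameters $(\alpha',L')$ delivers the claimed bound.

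To establish the embedding, let $l=\lfloor\alpha\rfloor$ and $l'=\lfloor\alpha'\rfloor$ under the strict-floor convention of Definition~\ref{def: Holder space}, so $l'\leq l$. For any multi-index $s$ with $\lvert s\rvert=l'$, I would verify $\lvert D^sf(x)-D^sf(y)\rvert\leq L'\lVert x-y\rVert_\infty^{\alpha'-l'}$ in two subcases. If $l'<l$, every first-order partial of $D^sf$ is continuous and bounded on the compact domain $[0,1]^\xdim$, so $D^sf$ is Lipschitz with some $M=M(L,\xdim)$; combined with $\lVert x-y\rVert_\infty\leq 1$ and $\alpha'-l'\in(0,1]$ this yields $\lvert D^sf(x)-D^sf(y)\rvert\leq M\lVert x-y\rVert_\infty^{\alpha'-l'}$. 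If $l'=l$, the assumed bound $L\lVert x-y\rVert_\infty^{\alpha-l}$ with $\alpha-l\geq \alpha'-l'$ already implies $L\lVert x-y\rVert_\infty^{\alpha'-l'}$ on $[0,1]^\xdim$, again since $\lVert x-y\rVert_\infty\leq 1$. Thus $f\in\Sigma(\alpha',L')$ with $L'=\max(L,M)$.

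With $f\in\Sigma(\alpha',L')$ in hand, running Algorithm~\ref{alg: meta algorithm} with input $\alpha'$ discretizes $\Xcal$ into $n\asymp T^{\xdim/(\xdim+2\alpha')}/\ln(T)^{2\xdim/(\xdim+2\alpha')}$ bins and supplies each base with misspecification budget $\epsilon=Ln^{-\alpha'/\xdim}$. Applying Lemma~\ref{lemma: local bias} to $f$ viewed as an element of $\Sigma(\alpha',L')$, the true in-bin bias of the degree-$l'$ Taylor approximation is at most $L'n^{-\alpha'/\xdim}$, which differs from $\epsilon$ only by the constant factor $L'/L$. This harmless mismatch propagates as a multiplicative constant through the proofs of Theorem~\ref{thm: main regret} and Theorem~\ref{thm: meta main regret}, so replaying those arguments verbatim with $\alpha\leftarrow\alpha'$ and $\lineardim\leftarrow\lineardim(\alpha')$ yields $R(T)\leq\tilde\bigo\bigl(\lineardim(\alpha')\,T^{(\xdim+\alpha')/(\xdim+2\alpha')}\bigr)$.

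For the complementary regime $\alpha'>\alpha$, the argument breaks down because Lemma~\ref{lemma: local bias} only guarantees a per-bin approximation error of order $Ln^{-\alpha/\xdim}$, while the algorithm still supplies the strictly smaller budget $\epsilon=Ln^{-\alpha'/\xdim}$ to each base misspecified linear bandit. Assumption A1 of Algorithm~\ref{alg: misspecified linear UCB} is then violated, the resulting upper confidence bounds are no longer honest, and the distribution-independent UCB analysis behind Theorem~\ref{thm: meta main regret} collapses. The main obstacle I anticipate is the boundary case $l'=l$ of the embedding together with the bookkeeping of Lipschitz constants for the lower-order derivatives; these are standard estimates on a compact set but require explicit verification to confirm that $L'$ indeed depends only on $L,\alpha,\alpha',\xdim$ and can be safely absorbed into the $\tilde\bigo$.
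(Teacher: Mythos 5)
Your proof follows essentially the same route as the paper's: reduce to Theorem~\ref{thm: meta main regret} run with parameters set for $\alpha'$ via the nesting of H\"older spaces, and for $\alpha'>\alpha$ observe that the supplied budget $\epsilon = Ln^{-\alpha'/\xdim}$ undershoots the true in-bin bias so the upper confidence bounds lose honesty. The one place you go beyond the paper (which simply asserts $\Sigma(\alpha,L)\subset\Sigma(\alpha',L)$ with the same constant) is in tracking the embedding constant $L'$, but you should then also note that when $L'>L$ the algorithm's supplied $\epsilon$ again undershoots the true bias by the factor $L'/L$, so ``replaying the arguments verbatim'' is not quite enough --- you need the same honesty-slack accounting you invoke for $\alpha'>\alpha$ to confirm that a constant-factor underestimate of the bias only inflates the regret by a constant.
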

The proof is deferred to Appendix section~\ref{proof of lemma meta regret with input gamma}.
Having established the performance of base algorithms with misspecified smoothness exponents, we present the adaptation strategy and its regret bound in Theorem~\ref{thm: adaptation, smooth corral}. Since it is impossible to achieve minimax optimal rates for multiple values of the smoothness parameter simultaneously, we introduce a user-sepecified parameter $R$ that controls the H\"older space over which minimax optimality is desired. We show that conditioned on achieving minimax rate for the space $\sum(R, L)$, our adaptation strategy provides best possible regret bound on all supersets $\sum(\alpha, L)$ where $\alpha \leq R$. The results are stated in Theorem~\ref{thm: adaptation, smooth corral}.
\begin{theorem}\label{thm: adaptation, smooth corral}
    Consider adapting to a continuous scale of nested H\"older spaces indexed by {$\alpha$ whose value is bounded in a given interval, for simplicity we assume $0<\alpha\leq 2$, where $\lineardim = \xdim$}. Define 
    $R\leq 2$ as a parameter set by the decision-maker that specifies the index of H\"older space for which minimax optimal regret is achieved. Define linear grid $\mathcal{G} = \{ \alpha_i = \frac{R}{\lfloor\log(T)\rfloor}i, i = 0, 1\dots \lfloor\log(T)\rfloor \}$ 
    so that the total number of base algorithms is $M = \vert \mathcal{G}\vert = \lceil\log(T)\rceil$. Consider using Corral with bases that are Meta-algorithms~(algorithm~\ref{alg: doubling trick for meta} in Appendix section~\ref{sec: doubling for meta algorithm}) with 
    input $\alpha_i\in \mathcal{G}, i\in [M]$. Then by setting the learning rate of Corral to be $\eta = \xdim^{-1} T^{-\frac{\xdim+R}{\xdim+2R}}$,
    the regret rates achieved for any H\"older exponent $\alpha\in (0, 2]$ are:
    \begin{align}
        &\sup_{f\in \sum(\alpha, L)}\EE[R(T)]\leq \tilde\bigo(\xdim T^\frac{d^2+2R\xdim + R\alpha}{(\xdim +2R)(\xdim+\alpha)}) \text{ for } \alpha\in (0,R], \label{eq: adapt rate smaller than R}\\
        &\sup_{f\in \sum(\alpha, L)}\EE[R(T)]\leq \tilde\bigo(\xdim T^\frac{\xdim+R}{\xdim+2R}) \text{ for } \alpha\in [R,2]. \label{eq: adapt rate larger than R}
    \end{align} 
\end{theorem}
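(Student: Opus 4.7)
The plan is to combine the Corral regret bound from~\cite{pacchiano2020model} applied with $M = \lceil \log T \rceil$ smoothed Meta-algorithm bases, one for each grid value $\alpha_i \in \mathcal{G}$, with the base-regret guarantee in Lemma~\ref{lemma: regret of meta with gamma}. At a high level, one picks a ``target'' grid point $\alpha_{i^*}$ (depending on the unknown true $\alpha$) to compete against, and then unpacks the Corral inequality under the prescribed learning rate $\eta = \xdim^{-1} T^{-(\xdim+R)/(\xdim+2R)}$.

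First, invoke the Corral regret bound, which for $M$ stochastic bases and learning rate $\eta$ schematically takes the form
\[
\EE[R(T)] \;\leq\; \tilde\bigo\!\left(\frac{M}{\eta}\right) + T\eta + (\text{contribution from the chosen base } i^*).
\]
The contribution depends on the base's regret rate and on how often Corral plays it. The bases are smoothed wrappers of Algorithm~\ref{alg: doubling trick for meta} (the anytime version is required since each base is queried a random number of rounds). By Lemma~\ref{lemma: regret of meta with gamma}, a base with input $\alpha_i \leq \alpha$ has regret $\tilde\bigo(\xdim\, T^{(\xdim+\alpha_i)/(\xdim+2\alpha_i)})$, while no such guarantee holds for $\alpha_i > \alpha$; this one-sided property forces the analysis to restrict the choice of $i^*$ to grid points with $\alpha_i \leq \alpha$.

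For the regime $\alpha \in [R, 2]$, take $\alpha_{i^*} = R$, which always lies in $\mathcal G$. Under the prescribed $\eta$ one has $M/\eta = \tilde\bigo(\xdim\, T^{(\xdim+R)/(\xdim+2R)})$ and $T\eta = \xdim^{-1} T^{R/(\xdim+2R)}$, with the former dominating, and the base contribution from Lemma~\ref{lemma: regret of meta with gamma} is $\tilde\bigo(\xdim\, T^{(\xdim+R)/(\xdim+2R)})$, yielding bound~\ref{eq: adapt rate larger than R} up to log factors. For the regime $\alpha \in (0, R]$, take $\alpha_{i^*} = \max\{\alpha_i \in \mathcal{G}: \alpha_i \leq \alpha\}$; the linear spacing of $\mathcal G$ gives $\alpha - \alpha_{i^*} \leq R/\log T$, so replacing $\alpha_{i^*}$ by $\alpha$ inside the exponent $(\xdim+\alpha_i)/(\xdim+2\alpha_i)$ costs only a multiplicative $T^{O(1/\log T)} = O(1)$ factor, absorbed in $\tilde\bigo$.

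The remaining step, and the main obstacle, is verifying algebraically that plugging the base regret and Corral overhead under this $\eta$ into the Corral bound reproduces the exponent $(\xdim^2 + 2R\xdim + R\alpha)/((\xdim+2R)(\xdim+\alpha))$ in bound~\ref{eq: adapt rate smaller than R}. Using the decomposition $\frac{\xdim^2+2R\xdim+R\alpha}{(\xdim+2R)(\xdim+\alpha)} = \frac{\xdim}{\xdim+\alpha} + \frac{R\alpha}{(\xdim+2R)(\xdim+\alpha)} = 1 - \frac{\alpha}{\xdim+\alpha}\cdot\frac{\xdim+R}{\xdim+2R}$, one sees that the dominant term is neither $M/\eta$ nor the raw base regret $T^{(\xdim+\alpha)/(\xdim+2\alpha)}$ alone, but rather a composite contribution reflecting that base $i^*$ is effectively played on roughly $\eta T$ rounds under the smoothed Corral, and its accumulated regret on those rounds must then be ``stretched'' over the full horizon when translated back to total regret. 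Tracking this $\eta$-dependence of the base's effective regret is the heart of the proof, and once handled cleanly the two claimed bounds fall out by direct substitution.
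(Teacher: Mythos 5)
Your proposal follows essentially the same route as the paper: invoke the smoothed-Corral bound of \cite{pacchiano2020model} with $M=\lceil\log T\rceil$ bases, compete against the largest grid point $\hat\alpha\le\alpha$ (using the one-sided guarantee of Lemma~\ref{lemma: regret of meta with gamma}), absorb the grid-approximation cost via $\alpha-\hat\alpha\le R/\log T$, and substitute the prescribed $\eta$. Your target decomposition $1-\frac{\alpha}{\xdim+\alpha}\cdot\frac{\xdim+R}{\xdim+2R}$ is exactly the exponent of the composite term the paper obtains, so the deferred algebra does close. The one step you leave as an ``obstacle'' is precisely the optimization over $\rho$ in the term $-\EE\bigl[\frac{\rho}{40\eta\ln T}-2\rho\,U(T/\rho,\delta)\log T\bigr]$ of Pacchiano et al.'s bound: the paper sets $\rho=\tilde\bigo(\eta^{\frac{\xdim+2\hat\alpha}{\xdim+\hat\alpha}}\xdim^{\frac{\xdim+2\hat\alpha}{\xdim+\hat\alpha}}T)$, which produces the composite term $\xdim^{\frac{\xdim+2\hat\alpha}{\xdim+\hat\alpha}}T\eta^{\frac{\hat\alpha}{\xdim+\hat\alpha}}$ and hence your exponent. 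Note that your heuristic that the base is ``effectively played on roughly $\eta T$ rounds'' is not quite the right mechanism --- the base's effective horizon is $T/\rho$ with $\rho$ scaling as $\eta^{(\xdim+2\hat\alpha)/(\xdim+\hat\alpha)}T$ rather than $1/\eta$ --- but this only affects the intuition, not the final substitution.
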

A straightforward example is shown in Figure~\ref{fig: adaptation}. Functions with H\"older exponent $\alpha> R$ essentially belongs to a subset of $\sum(R, L)$ and have the same regret rates as in equation~(\ref{eq: adapt rate larger than R}) because the algorithm did not fully exploit their smoothness. 
There are two sources of cost of adaptation, first the cost of adapting to $M$ grid points. Since $M=\bigo(\log(T))$, this has the same difficulty as the adaptation to two values in~\cite{locatelli2018adaptivity}. The second one, however, is a consequence of adapting to a continuous scale of $\alpha$. The cost is the rate difference between the exponent $\alpha$ and the closest value to it on $\mathcal{G}$, denoted $\estalpha\in \mathcal{G}$, s.t.  $\estalpha \leq \alpha \leq \estalpha + \frac{R}{\floor{\log(T)}}$. This cost can be alleviated by the design of the linear grid. We defer the full proof to Appendix section~\ref{sec: proof of thm: adaptation, smooth corral}. 

\begin{figure}[!htbh]
    \begin{minipage}{0.48\textwidth}
      \centering
      \includegraphics[width=\textwidth]{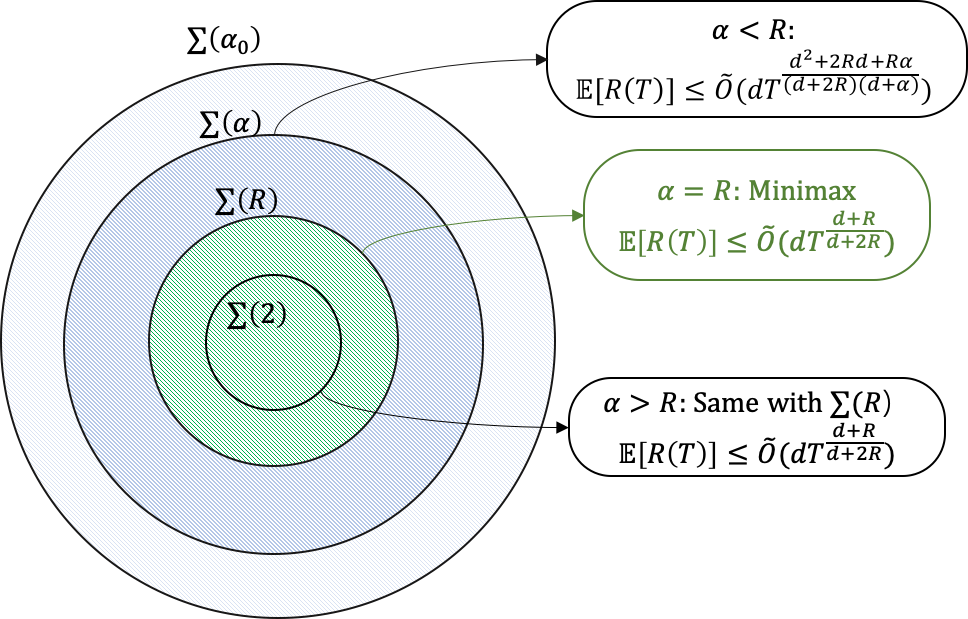}
      \caption{Illustration of adaptation to smoothness for continuous scale of H\"older spaces.}
      \label{fig: adaptation}
    \end{minipage}\hfill
    \begin{minipage}{0.48\textwidth}
      \centering
      \includegraphics[width=\textwidth]{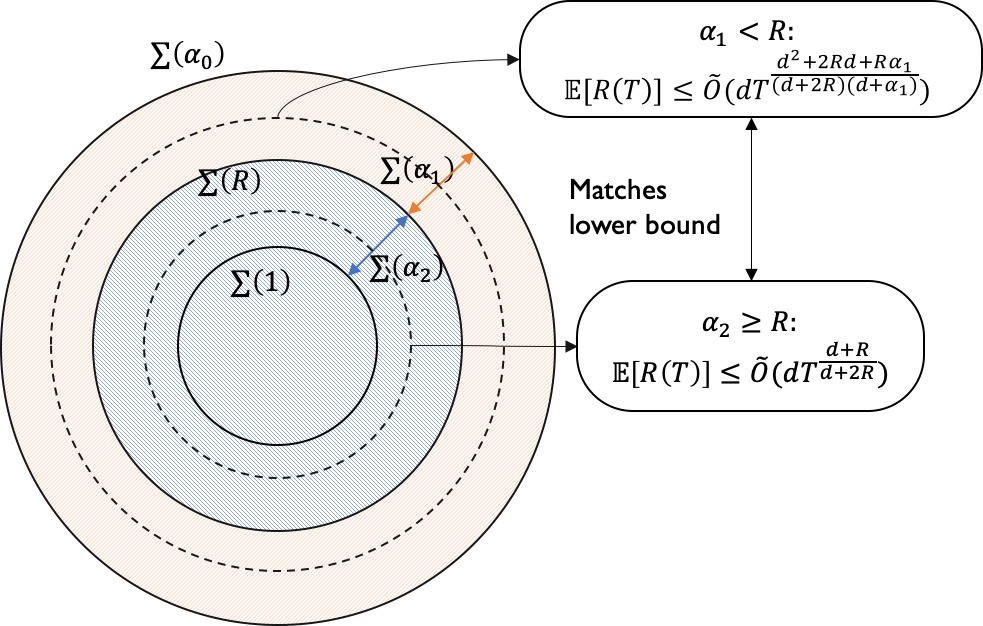}
      \caption{Illustration of values of exponents $\alpha_1, \alpha_2$ on which our proposed strategy matches the lower bound in \cite{locatelli2018adaptivity}. } 
    \label{fig: adaptation lower bound}
    \end{minipage}
 \end{figure}

\subsection{Comparison with Existing Lower Bound for Adaptation}
In this subsection, we compare the results in Theorem~\ref{thm: adaptation, smooth corral} to the existing lower bound in~\cite{locatelli2018adaptivity}. 
Theorem 3 of~\cite{locatelli2018adaptivity} state that given two smoothness values $\alpha_1<\alpha_2\leq 1$, if a strategy has expected regret $\EE[R_{\alpha_2}(T)]$ under exponent $\alpha_2$ that is $\tilde{\bigo}(T^{\frac{\xdim+\alpha}{\xdim+2\alpha}})$, then the regret of this strategy under the superset characterized by $\alpha_1$ is lower bounded by $\sup_{f\in \sum(\alpha_1, L)} \EE[R(T)] \geq \tilde\Omega(T R_{\alpha_2}(T)^{\frac{-\alpha_1}{\alpha_1+\xdim}})$, even if the strategy has access to both $\alpha_1$ and $\alpha_2$.

We make the following remark: for any pair of exponent values $(\alpha_1, \alpha_2)$ where $\alpha_1<R$ and $R\leq \alpha_2\leq 1$, the strategy proposed in Theorem~\ref{thm: adaptation, smooth corral} matches the lower bound except for log factors. We verify this by plugging in $\EE[R_{\alpha_2}(T)] =\tilde\bigo(T^\frac{\xdim+R}{\xdim+2R})$, omitting dependence on $\xdim$, to yield the lower bound on $\sum(\alpha_1, L)$ which is $\tilde\bigo(T^\frac{\xdim^2 + 2R\xdim+R{\alpha_1}}{(\xdim+2R)(\xdim+\alpha_1)})$. This is matched by our upper bound in equation~(\ref{eq: adapt rate smaller than R}), apart from log factors and $d$. An illustration is shown in Figure~\ref{fig: adaptation lower bound}.
In other words, the proposed algorithm can perform under unknown smoothness exponent and match the lower bound (available only for exponent values within $(0,1]$) on a subset of H\"older spaces.

\section{Conclusion}
The core of this paper is extending the assumption on function space from Lipschitz to H\"older spaces with higher-order smoothness in bandit optimization of black-box functions. We also study adaptation to the smoothness under this scope. 
The class of two-layer algorithms that we proposed consists of a Meta-algorithm with the choice of UCB (\cite{auer2002nonstochastic}) or Corral (\cite{agarwal2016corralling, pacchiano2020model}) and a set of misspecified bandit base-algorithms as arms. 
We derive regret upper bounds for $\alpha$-H\"older smooth functions with $\alpha>1$ that matches existing lower bounds in their dependence on $T$, the number of active queries, with straightforward generalization to larger $\alpha$.
Our framework provides useful insights in exploiting higher-order smoothness of reward functions for cumulative regret minimization, 
{because our two-layer structure allows base-algorithms to perform local exploration-exploitation tradeoff as opposed to the local pure exploration done for bandit optimization of $\alpha$-H\"older continuous functions.}
For adaptation to the smoothness exponent, we further previous works by deriving regret upper bound for adaptation to a continuous scale of H\"older spaces with exponent $\alpha$ {in a given range}. We show that by using bandit model selection algorithms, it can achieve the existing lower bound between two H\"older spaces,
even if the algorithm does not know both exponent values. 

Our work inspires several directions for the future. 
An intriguing direction is to study whether there exist gap-dependent bounds for the UCB-Meta algorithm, whose arms have non i.i.d rewards because they are bandit algorithms themselves. Such bounds could enable better rates for benign problem instances, for example with the growth conditions (mentioned in section 2). Another direction is the relaxation of the H\"older smooth assumption, to hold only around the maxima instead of everywhere on $\Xcal$, which is considered by prior works such as~\cite{auer2007improved, kleinberg2008multi, bubeck2010x}. Finally, it remains an open problem to establish the lower bound for adaptation when the smoothness exponents are larger than 1.  
\appendix
\onecolumn
\begin{center}
\resizebox{\linewidth}{!}{\mbox{\large\bfseries Appendix of \textit{Smooth Bandit Optimization: Generalization to H\"older Space}\par}}
\end{center}
\section{Auxiliary proofs for the main document}\label{app: auxilary proofs}
\subsection{Proof of Lemma~\ref{lemma: local bias}}\label{proof of lemma local bias}
\begin{proof}
    Recall the definition of H\"older smoothness: $\lvert f(x) - T_y^{l}(x)\rvert\leq L\lVert x-y\rVert_\infty^\alpha$. 
    For a hypercube $B$, $\lVert x-y \rVert_\infty \leq \Delta^{\frac{1}{\xdim}}, \forall x,y\in B$. By definition, when the function smoothness exponent $\alpha\in (1,2]$, $l=1$. 
    Notice that the Taylor polynomial of degree $l=1$ around $y$ is a linear   \footnote{We slightly abuse the notation and define short-hand notation $\la \theta,x\ra:= \theta_0+\sum_{i=1}^{\lineardim} \theta_i x_i$.
    } function of $x$: $T^{(l=1)}_y(x) = f(y) + \frac{\partial f}{\partial x_1}(y)(x_1-y_1) + \frac{\partial f}{\partial x_2}(y)(x_2-y_2) + \dots \frac{\partial f}{\partial x_d}(y)(x_d-y_d) = \la \theta, x\ra$. When $\alpha>2$, the Taylor polynomial can still be written as a linear function but of higher-dimensional feature map of $x$: $\phi: [0,1]^\xdim \rightarrow [0,1]^{\lineardim}$ which contains exponentiations of elements in $x$, using the operations defined for definition~\ref{def: Holder space}, $\phi(x) = \{x^s, \forall s, s.t. \abs{s}\leq l \}$. So:
    \begin{equation}
    \lineardim = \abs{\{s: 1\leq \abs{s}\leq l \}} = \sum_{1\leq j\leq l} {{j+d-1}\choose{d-1}} = \bigo({\xdim}^l)
    \end{equation}
    When $l=1$, it is equivalent to defining $\phi(x) = x$. 
    The parameter $\theta$ is determined by the derivatives of $f$ at $y$ and the value of $y$.
    Therefore, we know locally there exists a linear parameter in dimension $\theta^* = \argmin_\theta \lVert f - \phi(x)^T\theta\rVert_\infty, x\in B$, such that $\lVert f - \la\theta^*, \phi(x)\ra \rVert_\infty \leq \epsilon = {L} \Delta^{\frac{\alpha}{d}}, \forall x\in B$. Also, note that $\norm{\phi(x)}^2\leq {\lineardim}^2$ according to definition. 
    When the exponent $\alpha\in(0,1]$, $l$ is 0 and the Taylor polynomial is simply a constant. Therefore the same argument holds for $\theta^*$ for example when $\theta^*_{1},\dots,\theta^*_{d} =0$ (a constant function).  
    \end{proof}
\subsection{Proof of Theorem~\ref{thm: main regret}}\label{sec: proof of thm: main regret}
\begin{proof}
    Throughout this proof, we assume that the assumptions A1$\sim$3 hold. This proof is modified from that in~\cite{dani2008stochastic}. Some techniques are from~\cite{abbasi2011improved}. We only present the parts which we change. 
    First we proof the following bound on simple regret at each step:
    \begin{equation}\label{eq: simple pseudo-regret}
        r_t \leq 2\sqrt{\beta_t} \lVert A_t^{-1/2} x \rVert + 2\epsilon \sum_{\tau = 1}^{t-1} \lVert x^T A_t^{-1}x_\tau \rVert.
    \end{equation}
    And then we will bound the sum of these two terms separately. In order to proof inequality~\ref{eq: simple pseudo-regret}, we start from an important auxiliary theorem of confidence bound on $\theta^*$, Theorem~\ref{thm: main confidence}. 
    \begin{theorem}\label{thm: main confidence}
        Let $\beta_t = C\sigma^2d\ln(1+ t\kappa^2/d)\ln(\frac{2t^2}{\delta})\left (=\bigo(d\ln(t)\ln(\frac{t^2}{\delta}))\right)$ for a sufficiently large constant C, then with probability $1-\delta$, $\theta^*$ is contained in the confidence set:
        \[\tilde{C}_t = \{\hat\theta_t + \sqrt{\beta_t}A_t^{-1/2}z_d - A_t^{-1}(\sum_{s=1}^{t-1} b_s x_s), \lVert z_d \rVert_2\leq 1\},\]
        and as a result, 
        \[\la x, \theta^*\ra\leq\la x, \hat\theta_t \ra + \sqrt{\beta_t} \lVert A_t^{-1/2} x \rVert + \epsilon \sum_{s=1}^{t-1} \lvert x^TA_t^{-1}x_s \rvert.\]
    \end{theorem}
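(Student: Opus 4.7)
The plan is to show that with probability at least $1-\delta$, the recentered error vector $\theta^* - \hat\theta_t + A_t^{-1}\sum_{s<t} b_s x_s$ has squared $A_t$-norm bounded by $\beta_t$; since the set $\tilde C_t$ is by construction the $\sqrt{\beta_t}$-ball in $A_t$-norm around $\hat\theta_t - A_t^{-1}\sum_{s<t} b_s x_s$, containment of $\theta^*$ will then be immediate. Substituting $y_s = \langle x_s, \theta^*\rangle + b_s + \eta_s$ into $\hat\theta_t = A_t^{-1}\sum_{s<t} y_s x_s$ and using $A_t = I + \sum_{s<t} x_s x_s^T$, the algebra collapses to
\[
\theta^* - \hat\theta_t + A_t^{-1}\!\!\sum_{s<t}\! b_s x_s \;=\; A_t^{-1}\theta^* - A_t^{-1} N_t,
\]
where $N_t := \sum_{s<t}\eta_s x_s$. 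So it suffices to upper bound
\[
\bigl\|A_t^{-1}\theta^* - A_t^{-1} N_t\bigr\|_{A_t}^2 \;=\; (\theta^* - N_t)^T A_t^{-1}(\theta^* - N_t) \;\leq\; 2\,\theta^{*T}A_t^{-1}\theta^* \;+\; 2\,N_t^T A_t^{-1} N_t.
\]

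For the deterministic first piece, since $A_t \succeq I$ we have $\theta^{*T}A_t^{-1}\theta^* \leq \|\theta^*\|^2$, which is $O(1)$ by the a priori bound on $\|\theta^*\|$ implicit in assumptions A1--A2 (bounded mean reward on an action set with $\|x\|^2\leq\kappa^2=d$). For the stochastic second piece, I would invoke the self-normalized martingale tail bound of Abbasi-Yadkori et al.\ (2011) applied to the vector-valued martingale $N_t$ with sub-gaussian increments: for any fixed $\delta'>0$, with probability $1-\delta'$,
\[
\|N_t\|_{A_t^{-1}}^2 \;\leq\; 2\sigma^2\,\log\!\frac{\det(A_t)^{1/2}}{\delta'}.
\]
Combining with the determinant-trace inequality $\det(A_t) \leq (1 + t\kappa^2/d)^d$ produces a bound of order $\sigma^2 d\log(1+t\kappa^2/d) + \sigma^2\log(1/\delta')$. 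Choosing $\delta' = \delta/(2t^2)$ and union-bounding over $t \geq 1$ (using $\sum_t 2t^{-2} < \infty$) yields a uniform-in-$t$ bound that justifies the definition of $\beta_t$ for a sufficiently large absolute constant $C$.

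The second displayed inequality of the theorem then follows directly from the definition of $\tilde C_t$: once $\theta^*\in\tilde C_t$, write $\theta^* = \hat\theta_t + \sqrt{\beta_t}\,A_t^{-1/2}z_d - A_t^{-1}\sum_{s<t} b_s x_s$ with $\|z_d\|\leq 1$, take the inner product with $x$, apply Cauchy--Schwarz on the middle term ($\langle x, A_t^{-1/2}z_d\rangle \leq \|A_t^{-1/2}x\|$), and use $|b_s|\leq\epsilon$ together with the triangle inequality on the bias term. The principal technical obstacle is the self-normalized concentration for $N_t$ together with the anytime union bound; notably, the misspecification bias is handled entirely \emph{outside} the probabilistic argument, since it appears only as a deterministic recentering of the confidence ellipsoid, so the concentration analysis itself reduces to the unbiased case of Dani et al.\ (2008).
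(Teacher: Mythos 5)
Your proposal is correct, but it reaches the conclusion by a genuinely different route than the paper. The paper keeps the same recentered ellipsoid (its $\tilde\theta_t = \hat\theta_t - A_t^{-1}X_t^T b_t$ is exactly your recentering, so the bias is handled deterministically in both arguments), but it then follows the \emph{ConfidenceBall}$_2$ analysis of Dani et al.\ (2008): an induction on $Z_t = (\theta^*-\tilde\theta_t)^T A_t(\theta^*-\tilde\theta_t)$, whose growth is decomposed into a martingale-difference sum $\sum_\tau M_\tau$ (controlled by a Bernstein-type sub-exponential bound) plus a term $\sum_\tau \eta_\tau^2 w_\tau^2/(1+w_\tau^2)$ (controlled by a union bound on $\max_\tau \eta_\tau^2$), with the conditioning at each induction step on $\theta^*\in C_\tau$ for all earlier $\tau$. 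You instead collapse $\theta^*-\tilde\theta_t$ to the closed form $A_t^{-1}(\theta^* - N_t)$ and invoke the self-normalized martingale inequality of Abbasi-Yadkori et al.\ (2011) together with the determinant--trace bound. Your route is shorter, avoids both the induction and the concentration of $\max_\tau\eta_\tau^2$, and gives a $\beta_t$ smaller by a $\sqrt{\log T}$-type factor --- indeed the paper explicitly acknowledges this alternative (attributing it to Lattimore and Szepesvari, 2019) and states it chose the Dani-style proof for completeness and because its intermediate bounds (e.g.\ on $\sum_t \min(1,w_t^2)$) are reused in the Meta-algorithm analysis. One small correction: under A1--A2 the deterministic piece satisfies $\theta^{*T}A_t^{-1}\theta^*\leq\lVert\theta^*\rVert^2\leq d(1+\epsilon)^2 = \bigo(d)$, not $\bigo(1)$ (the paper gets this via the basis vectors $e_i\in\Xcal$); this is harmless since $\beta_t$ already carries a factor of $d$, but it should be stated as $\bigo(d)$.
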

    The proof of Theorem~\ref{thm: main confidence} is in Appendix~\ref{sec: proof of thm: main confidence}.
    Now, if $\theta^*\in \tilde{C}_t$, we have 
    \begin{equation*}
        \begin{split}
        r_t &= \la x^*, \theta^*\ra - \la x_t, \theta^* \ra \\
        &\leq \la x^*, \theta^*\ra - UCB_t(x^*) + UCB_t(x_t) - \la x_t, \theta^* \ra \\
        &\leq UCB_t(x_t) - \la x_t, \theta^* \ra \\
        &\leq 2\sqrt{\beta_t} \lVert A_t^{-1/2} x_t \rVert + 2\epsilon \sum_{s=1}^{t-1} \lvert x_t^TA_t^{-1}x_s \rvert.
        \end{split}
    \end{equation*}
    The first inequality is because our algorithm will only choose $x_t$ when $UCB_t(x_t) \geq UCB_t(x^*)$. The last inequality holds because
    \begin{equation*}
        \begin{split}
        \la x, \theta^*\ra &\geq \la x, \hat\theta_t \ra + \min_{z_d \in B_2^d} \sqrt{\beta_t} \la x, A_t^{-1/2}z_d \ra - \sum_{s=1}^{t-1}  b_s x^TA_t^{-1}x_s \nonumber \\
        &\geq \la x, \hat\theta_t \ra - \sqrt{\beta_t} \lVert A_t^{-1/2} x \rVert - \sum_{s=1}^{t-1}  b_s x^TA_t^{-1}x_s \nonumber  \\
        &\geq UCB_t(x) - 2\sqrt{\beta_t} \lVert A_t^{-1/2} x \rVert - 2\epsilon \sum_{s=1}^{t-1} \lvert x^TA_t^{-1}x_s \rvert.
        \end{split}
    \end{equation*}
    By assumption on the mean reward function value, the absolute value of instant pseudo-regret $\lvert r_t \rvert$ is bounded by $1+\epsilon$. Therefore, combining inequality~(\ref{eq: simple pseudo-regret}) and $r_t\leq 2 + 2\epsilon$, we have that\footnote{$a \land b =\min(a,b)$}
    \begin{equation}\label{eq: decompose of simple regret}
        \begin{split}
        r_t &\leq (2+2\epsilon)\land \left(2\sqrt{\beta_t} \lVert A_t^{-1/2} x_t \rVert + 2\epsilon \sum_{\tau = 1}^{t-1} \lVert x_t^T A_t^{-1}x_\tau \rVert\right) \\
        &\leq 2\underbrace{\left(1 \land \sqrt{\beta_t} \lVert A_t^{-1/2} x_t \rVert \right)}_\text{\#1} + 2\underbrace{\epsilon \sum_{\tau = 1}^{t-1} \lVert x_t^T A_t^{-1}x_\tau \rVert}_\text{\#2} + 2\epsilon.
        \end{split}
    \end{equation}
    Sum of term $\#1$ is bounded using bound~(\ref{eq: original bound on w_t}) and  Cauchy Schwartz inequality:
    \begin{equation}
    2\sum_{t=1}^T (1\land \sqrt{\beta_t} \lVert A_t^{-1/2} x_t \rVert) \leq 2\sqrt{T\beta_T\sum_{t=1}^T (1\land\lVert x_t^T A_t^{-1} x_t \rVert})  = \sqrt{8d\beta_T T\ln(1 + T\kappa^2/d)}.
    \end{equation}
    For sum of term $\#2$, we first have
    \begin{equation*}
        \begin{split}
        \sum_{\tau=1}^{t-1}x_t^T A_t^{-1}x_\tau 
        &\leq \sqrt{t \sum_{\tau=1}^{t-1} x_t^T A_t^{-1}x_\tau x_\tau^T A_t^{-1}x_t} \\
        &= \sqrt{t x_t^T A_t^{-1}(\sum_{\tau=1}^{t-1} x_\tau x_\tau^T) A_t^{-1}x_t}\\
        &\leq \sqrt{t x_t^T A_t^{-1}(\sum_{\tau=1}^{t-1} x_\tau x_\tau^T) A_t^{-1}x_t + x_t^T A_t^{-1}A_t^{-1}x_t} \\
        &=\sqrt{t x_t^T A_t^{-1}(\sum_{\tau=1}^{t-1} x_\tau x_\tau^T + I_d) A_t^{-1}x_t}=\sqrt{t x_t^T A_t^{-1}x_t}.
        \end{split}
    \end{equation*}
    Then the sum $\sum_{t=1}^T(\sum_{\tau=1}^{t-1}x_t^T A_t^{-1}x_\tau)$ can be bounded by:
    \begin{equation}\label{eq: decompose of bias-regret}
        \begin{split}
        \sum_{t=1}^T(\sum_{\tau=1}^{t-1}x_t^T A_t^{-1}x_\tau)&\leq \sum_{t=1}^T(\sqrt{t x_t^T A_t^{-1}x_t}) \nonumber\\
        &\leq \sqrt{(\sum_{t=1}^Tt)(\sum_{t=1}^T x_t^T A_t^{-1}x_t)}.
        \end{split}
    \end{equation}
    Now, we need to bound $\sum_{t=1}^T x_t^T A_t^{-1}x_t$ with inequality~(\ref{eq: original bound on w_t}). We know that $A_t^{-1}$ is a full-rank matrix. Therefore, denote its eigenvalues and eigenvectors as $\lambda_1\dots \lambda_d, v_1\dots v_d$. Then\footnote{This proof is extracted from a remark in proof of Theorem 3 in~\cite{abbasi2011improved}}
    \begin{equation*}
        \begin{split}
        x_t^T A_t^{-1}x_t &= \left(c_1 v_1+\dots +c_d v_d\right)^TA_t^{-1}\left(c_1 v_1+\dots + c_d v_d\right) \\
        &=c_1^2\lambda_1 +\dots + c_d^2 \lambda_d \\
        &\leq \lambda_{\max}(A_t^{-1}) \norm{x_t}^2 = \frac{\kappa^2}{\lambda_{\min}(A_t)} \\
        &\leq \frac{\kappa^2}{\lambda_{\min}(I_d)+ \lambda_{\min}(X_t^T X_t)} \leq \kappa^2.
        \end{split}
    \end{equation*}
    The second last inequality holds due to Weyl's inequality. Therefore, 
    \begin{equation}\label{eq: bound on sum wt}
        \begin{split}
        \sum_{t=1}^T x_t^T A_t^{-1} x_t &\leq \kappa^2 \sum_{t=1}^T (x_t^T A_t^{-1} x_t \land 1) \nonumber \\
        &\leq \kappa^2 (2d\ln(1 + T\kappa^2/d)).
        \end{split}
    \end{equation}
    Putting the above together, 
    \begin{equation}\label{eq: bound on linear UCB term two}
        \begin{split}
        \sum_{t=1}^T \left(2\epsilon \sum_{\tau=1}^{t-1} x_t^T A_t^{-1}x_\tau\right) 
        &\leq 2\epsilon \sqrt{(\sum_{t=1}^T t) (\sum_{t=1}^T x_t^T A_t^{-1}x_t)}\\
        &\leq 2\epsilon T \kappa \sqrt{2d\ln(1 + T\kappa^2/d)}. 
        \end{split}
    \end{equation}
    Finally, plugging in $\kappa^2 = d$ gives the final results.
 \end{proof}

\subsubsection{Proof of Theorem~\ref{thm: main confidence}}\label{sec: proof of thm: main confidence}
\begin{proof}
    Let $\hat\theta_t = A_t^{-1}X_t^T y$ denote the regularized least square estimator at time $t$. Matrix $X_t$ has dimension $(t-1)\times d$, where each row is a past action (until time $t$). We first define an unobserved variable $\tilde\theta_t$:
    \begin{equation}\label{def: bias}
    \tilde{\theta_t} = A_t^{-1} X_t^T (X_t \theta^* + \eta_t) = \hat\theta_t - A_t^{-1} X_t^T b_t,
    \end{equation}
    here we abuse the notations and let $\eta_t$ and $b_t$ be the $(t-1) \times 1$ vector containing noise and bias of each time. 
    Then we define the following confidence ellipsoid centered at $\tilde\theta_t$:
    \begin{equation}\label{def: new ellipsoid}
    C_t = \{\theta: (\theta - \tilde\theta_t)^TA_t(\theta - \tilde\theta_t) \leq \beta_t\},
    \end{equation}
    and prove the following lemma as an analog to Theorem 5 of~\cite{dani2008stochastic}:
    \begin{lemma}\label{lemma: high prob bound old confidence ball}
        The true linear parameter $\theta^*$ is contained in ellipsoid $C_t$, specifically, $\PP(\forall t, \theta^* \in C_t) \geq 1 - \delta$.
    \end{lemma}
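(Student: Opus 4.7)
The plan is to exploit the fact that $\tilde\theta_t$ is defined precisely to absorb the bias: since $\tilde\theta_t = A_t^{-1} X_t^\top (X_t \theta^* + \eta_t)$, the deviation $\tilde\theta_t - \theta^*$ depends only on the noise sequence $\eta_t$ and not on the misspecification vector $b_t$ at all. Proving $\theta^* \in C_t$ with high probability therefore reduces to a standard confidence-ellipsoid argument for the purely noisy linear model $y_\tau = \langle x_\tau, \theta^*\rangle + \eta_\tau$, while the bias is handled separately by the explicit translation $A_t^{-1} X_t^\top b_t$ that already appears between $\tilde\theta_t$ and $\hat\theta_t$ (and correspondingly in the $UCB_t$ formula).

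First I would rewrite $\tilde\theta_t - \theta^*$ in closed form. Using $A_t = I_d + X_t^\top X_t$, the identity $A_t^{-1}X_t^\top X_t = I_d - A_t^{-1}$ gives
\[\tilde\theta_t - \theta^* \;=\; A_t^{-1} X_t^\top \eta_t \;-\; A_t^{-1}\theta^*,\]
so by the triangle inequality in the $A_t$-norm
\[(\tilde\theta_t - \theta^*)^\top A_t (\tilde\theta_t - \theta^*) \;\leq\; 2\,\|X_t^\top \eta_t\|_{A_t^{-1}}^2 \;+\; 2\,\|\theta^*\|_{A_t^{-1}}^2.\]
The deterministic second term is bounded by $2\|\theta^*\|_2^2$, a constant under assumptions A1--A2 (the mean reward is bounded on $\Xcal$ with $\|x\|^2\leq d$, which controls $\|\theta^*\|_2$). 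The first term is the classical self-normalized object driving concentration for noisy linear regression: invoking the sub-Gaussian self-normalized martingale inequality of \cite{abbasi2011improved} (their Theorem~1), we obtain with probability at least $1-\delta$, simultaneously for every $t\geq 1$,
\[\|X_t^\top \eta_t\|_{A_t^{-1}}^2 \;\leq\; 2\sigma^2 \log\!\left(\frac{\det(A_t)^{1/2}}{\delta}\right).\]
Combining with the elliptic-potential bound $\det(A_t) \leq (1 + t\kappa^2/d)^d$ and plugging in $\kappa^2 = d$ yields $(\tilde\theta_t - \theta^*)^\top A_t (\tilde\theta_t - \theta^*) \leq \beta_t$ for the $\beta_t$ prescribed in Algorithm~\ref{alg: misspecified linear UCB}, after absorbing universal constants into the leading factor of $128$; this is exactly the assertion $\theta^* \in C_t$ for all $t$.

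The main obstacle will be maintaining the uniform-in-$t$ guarantee under only the sub-Gaussian noise assumption A3, since the original proof in \cite{dani2008stochastic} relied on bounded noise and a covering/Bernstein device that does not deliver the right confidence radius here without inflating $\beta_t$ by an extra $\sqrt{t}$ factor. This is precisely what the Laplace/martingale machinery of \cite{abbasi2011improved} repairs, and it explains why $\beta_t$ contains both the $d\ln(1+t)$ factor (from the determinant bound) and the $\ln\!\bigl(\tfrac{4(t+1)^2}{\delta}\bigr)$ factor (from the martingale tail); all of the remaining manipulations -- notably re-expanding $\hat\theta_t = \tilde\theta_t + A_t^{-1}X_t^\top b_t$ and using $|b_s|\leq\epsilon$ to produce the explicit $\epsilon\sum_s |x^\top A_t^{-1} x_s|$ correction needed later in Theorem~\ref{thm: main confidence} -- are routine algebra.
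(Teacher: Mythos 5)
Your proof is correct, but it takes a genuinely different route from the paper's. Both arguments start from the same key observation — that $\tilde\theta_t$ is defined so that $A_t(\tilde\theta_t-\theta^*)=X_t^T\eta_t-\theta^*$ depends only on the noise and not on the bias vector $b_t$ — but they diverge from there. The paper follows the induction of \cite{dani2008stochastic}: it tracks the recursion $Z_t\leq Z_1+2\sum_\tau \eta_\tau\frac{x_\tau^T(\tilde\theta_\tau-\theta^*)}{1+w_\tau^2}+\sum_\tau\eta_\tau^2\frac{w_\tau^2}{1+w_\tau^2}$, which requires $\theta^*\in C_\tau$ at every earlier step to control the martingale increments, and it replaces Dani et al.'s bounded-noise concentration with sub-exponential (Bernstein-type) bounds for both the martingale sum and $\max_\tau\eta_\tau^2$. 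You instead split $\|\tilde\theta_t-\theta^*\|_{A_t}^2\leq 2\|X_t^T\eta_t\|_{A_t^{-1}}^2+2\|\theta^*\|_{A_t^{-1}}^2$ and apply the self-normalized martingale inequality of \cite{abbasi2011improved} directly, which gives a uniform-in-$t$ bound with no induction at all. Your route is cleaner and yields a tighter radius — roughly $2\sigma^2 d\ln(1+t\kappa^2/d)+\bigo(\ln(1/\delta))+2\|\theta^*\|^2$, i.e.\ additive rather than multiplicative log factors — which is exactly the $\sqrt{\log T}$ improvement the paper itself attributes to \cite{lattimore2019learning}; the paper's choice buys consistency with the Dani-style machinery ($w_t$, $Z_t$, the explicit $\beta_t$ schedule) that its Theorem~\ref{thm: main regret} proof reuses. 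Two small remarks: your claim that adapting Dani's device under sub-Gaussian noise would inflate $\beta_t$ by $\sqrt{t}$ is not accurate — the paper's appendix shows the induction goes through with only an extra $\ln(t^2/\delta)$ factor; and your bound on $\|\theta^*\|_2$ should be made explicit as in the paper (using $e_i\in\Xcal$ and A1--A2 to get $\|\theta^*\|^2\leq d(1+\epsilon)^2$), since it is needed to verify that the combined radius sits below the prescribed $\beta_t$.
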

    The proof is in Appendix section~\ref{sec: proof of high prob bound old confidence ball}. However, we do not observe the vector $b_t$, so we cannot calculate $C_t$ in our algorithm. So instead, we define a larger $\tilde{C}_t$ that contains $C_t$, which will naturally contains $\theta^*$ with high probability. To construct $\tilde C_t$, we first re-write $C_t$ as
    \begin{equation}\label{def: old confidence set}
        C_t = \{\tilde\theta_t + \sqrt{\beta_t}A_t^{-1/2}z_d, \lVert z_d \rVert_2\leq 1\},
    \end{equation}
    then plug in equation~(\ref{def: bias}) to yield:
    \begin{equation}
        \begin{split}
        \tilde\theta_t + \sqrt{\beta_t}A_t^{-1/2}z 
        & = \hat\theta_t + \sqrt{\beta_t}A_t^{-1/2}z - A_t^{-1}X_t^T b_t\\
        & = \hat\theta_t + \sqrt{\beta_t}A_t^{-1/2}z - A_t^{-1}(\sum_{s=1}^{t-1} b_s x_s).
        \end{split}
    \end{equation}
    
    Therefore, we know that with high probability, 
    \begin{equation}\label{def: main confidence set}
        \theta^* \in \tilde{C}_t = \{\hat\theta_t + \sqrt{\beta_t}A_t^{-1/2}z_d - A_t^{-1}(\sum_{s=1}^{t-1} b_s x_s)\}.
    \end{equation}
    Therefore, we have a computable confidence bound for $x$: 
    \begin{equation}\label{def: main ucb}
        \begin{split}
        UCB_t(x) &= \max_{\theta\in\tilde{C}_t} \la x, \theta \ra \\
        &= \la x, \hat\theta_t \ra + \max_{z_d \in B_2^d} \sqrt{\beta_t} \la x, A_t^{-1/2}z_d \ra - \sum_{s=1}^{t-1}  b_s x^TA_t^{-1}x_s \\
        &\leq\la x, \hat\theta_t \ra + \sqrt{\beta_t} \lVert A_t^{-1/2} x \rVert - \sum_{s=1}^{t-1}  b_s x^TA_t^{-1}x_s\\
        &\leq \la x, \hat\theta_t \ra + \sqrt{\beta_t} \lVert A_t^{-1/2} x \rVert + \epsilon \sum_{s=1}^{t-1} \lvert x^TA_t^{-1}x_s \rvert.
        \end{split}
    \end{equation}
    The first inequality is derived by Cauchy Schwartz inequality and the fact that $z_d$ is in unit ball.
    \end{proof}

\subsubsection{Proof of Lemma~\ref{lemma: high prob bound old confidence ball}}\label{sec: proof of high prob bound old confidence ball}
\begin{proof}
    Lemma~\ref{lemma: high prob bound old confidence ball} is a parallel to Theorem 5 in~\cite{dani2008stochastic}, with the difference of sub-gaussian noise, ellipsoid centre $\tilde{\theta}_t$ and misspecification in observation. The key idea is the same, namely to use induction to bound the growth of $Z_t=(\theta^* - \tilde\theta_t)^T A_t(\theta^* - \tilde\theta_t)$ and proof that $Z_t\leq \beta_t$, i.e. the $\theta^*$ is contained in $C_t$, at each time step $t$. 
    The following analysis used the same notations and definitions as section $5.2$ in~\cite{dani2008stochastic} unless otherwise specified. Under Lemma~\ref{lemma: high prob bound old confidence ball}'s definition of confidence set $C_t$, we have that: 
    \begin{align}
        H_t &= A_t(\tilde\theta_t - \theta^*) = X_t^T\eta_t - \theta^*, \label{eq: H_t} \\
        Z_t &= (\theta^* - \tilde\theta_t)^T A_t(\theta^* - \tilde\theta_t)
        = H_t^T A_t^{-1} H_t.
    \end{align}
    Equation~\ref{eq: H_t} holds because of this key property:
    \begin{equation}
    \tilde\theta_t: A_t \tilde\theta_t = X_t^T X_t\theta^* + X_t^T\eta_t.
    \end{equation}
    And the rest of the proof in~\cite{dani2008stochastic} should go through by substituting $Y_t$ with $H_t$ (defined above) and $\hat\mu$ with our definition of $\tilde{\theta}$ (centre of the confidence ellipsoid). 
    Except, to accommodate the sub-gaussian noise assumption that replaces their bounded noise assumption, we have to make two changes in the proof. Both are in analyzing the growth of $Z_t$ in the induction. 
    Recall that~\cite{dani2008stochastic} proved this relation: 
    \begin{equation}
        Z_t\leq Z_1 + 2\sum_{\tau=1}^{t-1} \eta_t \frac{x_t^T(\tilde\theta_t - \theta^*)}{1+w_t^2} + \sum_{\tau=1}^{t-1} \eta_\tau^2\frac{w_{\tau}^2}{1+w_{\tau}^2}.
    \end{equation}
    We first look at the concentration of the sum of martingale difference sequence that makes up $Z_t$: same with~\cite{dani2008stochastic}, define $M_t = 2\eta_t \frac{x_t^T(\tilde\theta_t - \theta^*)}{1+w_t^2}$ where $w_t \stackrel{\triangle}{=}\sqrt{x_t^TA_t^{-1}x_t}$. According to our assumption, the noise sequence is a sub-gaussian martingale difference sequence with parameter $\sigma^2$. 
    Therefore, $M_t$ is a sub-gaussian martingale difference sequence. Specifically, we know that the square of subgaussian parameter is $ 4\sigma^2(\frac{\lvert x_t^T(\tilde\theta_t - \theta^*)\rvert}{1+w^2_t})^2$. By definitions we know that $M_t\given \Hcal_t$ is $(\nu^2_t=4\sigma^2(\frac{\lvert x_t^T(\tilde\theta_t - \theta^*)\rvert}{1+w^2_t})^2,a_t = 0)$ sub-exponential(definition 2.7 in~\cite{wainwright2019high}) and therefore the sum $\sum_{\tau=1}^tM_\tau$ is also sub-exponential, with parameters $(\sqrt{\sum_{\tau=1}^t \nu_\tau^2}, a = \max_{\tau}a_\tau=0)$(Theorem 2.19 (1) in~\cite{wainwright2019high}). The following inequality is conditioned on the fact that from time $\tau=1\dots t$, $\theta^*$ is contained in $C_{\tau}$ (by the induction). 
    \begin{equation*}
        \begin{split}
            \sum_\tau^t \nu_t^2 &= 4\sigma^2 \sum_{\tau=1}^t (\frac{\lvert x_\tau^T(\tilde\theta_\tau - \theta^*)\rvert}{1+w^2_\tau})^2 \\
            &\leq 4\sigma^2 \sum_{\tau=1}^t (\frac{\sqrt{\beta_\tau}w_\tau}{1+w^2_\tau})^2 \\
            &\leq 4\sigma^2 \sum_{\tau=1}^t \beta_\tau (\min(1/2, w_\tau))^2 \\
            &\leq 4\sigma^2 \sum_{\tau=1}^t \beta_\tau \min(1/4, w_\tau^2) \\
            &\leq 4\sigma^2 \beta_t \sum_{\tau=1}^t \min(1, w_\tau^2) \\ 
            &\leq 4\sigma^2 \beta_t \left(2d\ln(1 + t\kappa^2/d)\right)\hfill\text{See bound~\ref{eq: original bound on w_t}} \\ 
            &= 8\sigma^2 d\beta_t\ln(1 + t\kappa^2/d).
        \end{split}
    \end{equation*}
    The proof for the first three inequalities is the same as Lemma 7 and section 5.2.1 in~\cite{dani2008stochastic}. 
    Then we apply a Bernstein-type concentration bound for sub-exponential martingale difference sequence (Theorem 2.19 (2) in~\cite{wainwright2019high}). Plugging in the values of $a$ and ${\sum_{\tau=1}^t \nu_\tau^2}$, we have that
    \begin{equation}\label{eq: bound on M_tau}
        \begin{split}
        \PP(\vert \sum_{\tau=1}^{t-1} M_\tau \vert\geq s) &\leq 2\exp(\frac{-s^2}{2\sum_{\tau=1}^{t-1} \nu_t^2})\\
        &\leq 2\exp(\frac{-s^2}{16\sigma^2 d\beta_t\ln(1 + (t-1)\kappa^2/d)}) \\
        &\stackrel{s = \frac{\beta_t}{2}}{=} 2\exp\left(\frac{-\beta_t}{64\sigma^2 d \ln(1+(t-1)\kappa^2/d)} \right)\\
        &\leq \frac{\delta}{2t^2} \hfill \text{ (Needed for union bound over all times)}.
        \end{split}
    \end{equation}
    Therefore, as long as $\beta_t$ is larger or equal to $64\sigma^2d\ln(1+ (t-1)\kappa^2/d)\ln(\frac{4t^2}{\delta})$, $\sum_{\tau=1}^{t-1} M_\tau \leq \frac{\beta_t}{2}$ with probability larger or equal to $1-\frac{\delta}{2t^2}$. 
    
    The second change is for the third quantity that makes up 
    $Z_t$: $\sum_{\tau=1}^{t-1} \eta_\tau^2\frac{w_{\tau}^2}{1+w_{\tau}^2}$. We need to bound $\max_{\tau \leq t-1}\eta_\tau^2$ with high probability.  
    By algebra calculations, we know that $\eta_\tau^2$ is sub-exponential with parameters {$(\nu=32\sigma^4, a=4\sigma^2)$}\footnote{For this part, we borrowed the proof from Example 2.8 in~\cite{wainwright2019high} and \url{http://proceedings.mlr.press/v33/honorio14-supp.pdf}}. We can apply union bound with the tail bound of sub-exponential variables:
    \begin{equation*}
        \begin{split}
            \PP(\max_{\tau \leq t-1} (\eta_\tau^2 - \EE[\eta^2])\geq z) &\leq \sum_{\tau=1}^{t-1}\PP((\eta^2_\tau - \EE[\eta^2]) \geq z) \\
            &\leq (t-1)\exp(-\frac{z}{2a}) \text{ (Proposition 2.9 in~\cite{wainwright2019high})} \\
            &\leq \frac{\delta}{2t^2} \text{ (Needed for union bound over all times)}.
        \end{split}
    \end{equation*}
    Set $z = 8\sigma^2\ln(\frac{2t^3}{\delta})$ so that $\PP(\max_{\tau \leq t-1} \eta_\tau^2 - \EE[\eta^2] \leq z) = \PP(\max_{\tau \leq t-1} \eta_\tau^2  \leq z + \EE[\eta^2])\geq 1- \frac{\delta}{2t^2}$. By the fact that $\EE[\eta]=0$, $\EE[\eta^2] = \text{Var}(\eta)\leq \sigma^2$, which is a property of subgaussian variables. So $\PP(\max_{\tau \leq t-1} \eta_\tau^2  \leq z + \sigma^2)\geq 1- \frac{\delta}{2t^2}$.
    The following holds with probability larger than $1-\frac{\delta}{2t^2}$:
    \begin{equation*}
    \begin{split}
        \sum_{\tau=1}^{t-1} \eta_\tau^2\frac{w_{\tau}^2}{1+w_{\tau}^2} &\leq (\max_{\tau \leq t-1}\eta_\tau^2) \sum_{\tau=1}^{t-1} \min(w_\tau^2, 1) \\
        &\leq (\max_{\tau \leq t-1}\eta_\tau^2)2d\ln(1 + t\kappa^2/d) \\
        & = (8\sigma^2\ln(\frac{2t^3}{\delta})+\sigma^2) 2d\ln(1 + (t-1)\kappa^2/d) \\
        & = 8\sigma^2 (\ln(\frac{2t^3}{\delta})+\frac{1}{8}) 2d\ln(1 + (t-1))\kappa^2/d) \\
        & = 16\sigma^2d\ln(1 + (t-1)\kappa^2/d)\left(\ln(\frac{2t^3}{\delta})+\frac{1}{8}\right).
    \end{split}
    \end{equation*}
    
    Except the two changes above, one last thing to note is the quantity $Z_1$ analyzed at the end of proof of Lemma 12 in~\cite{dani2008stochastic}. In our assumption of the reward function value, we conclude that
    \begin{equation*}
        \begin{split}
            Z_1 &= (\theta^* - 0)^T I (\theta^* - 0)= \norm{\theta^*}^2 \\
            &= \sum_{i=1}^d (e_i^T \theta^*)^2 \hfill \text{ ($e_i$ is base vector of dimension $i$, note that $e_i\in \Xcal$)} \\
            &\leq d(1+\epsilon)^2.
        \end{split}
    \end{equation*}
    
    As a result, if it is satisfied that $Z_t \leq Z_1 + \beta_t/2 + 16\sigma^2d\ln(1 + (t-1)\kappa^2/d)(\ln(\frac{2t^3}{\delta})+\frac{1}{8})  \leq \beta_t$,
    which enables the induction in Lemma 14 in~\cite{dani2008stochastic}, then the rest of the proof should go through smoothly. We argue that setting $\beta_t = C \sigma^2 d \ln(t)\ln(\frac{4t^2}{\delta})$ for a large enough constant $C$ suffices. This is under the reasonable assumption that $\epsilon$ is $\bigo(1)$ and $\sigma$ is a constant\footnote{Recall that according to Lemma~\ref{lemma: local bias}, $\epsilon$ is bounded by the Lipschitz constant $L$ and is therefore $\bigo(1)$}.
    
    It is worth mentioning\footnote{This remark is made by~\cite{abbasi2011improved}.} that~\cite{dani2008stochastic} requires the relationship between $t$ and $\delta$ to be approximately $0<1.05\delta\leq t^2$, hence their requirement\footnote{However, we believe that this should not translate to a constraint on $t$, but on $\delta$ instead. Because $Z_t \leq \beta_t$ is required for every step $t$ to complete the induction, so if it only holds for large $t$ then the induction will fail as well. } of ``for sufficiently large T" in Theorem 1 and 2. This is because of the last step of their induction proof for Theorem 5 requires: $Z_t \leq d + \beta_2/2 + 2d\ln(t)\leq \beta_t$. 
    In our setting, the requirement in induction
    translates to this (second) constraint(plugging in $\kappa^2 = d$): $\beta_t\geq 2d(1+\epsilon)^2 + 32\sigma^2 d \ln(t)(\ln(\frac{2t^3}{\delta}+\frac{1}{8}))$. 
    Recall the first constraint on $\beta_t$ is
    $\beta_t\geq 64\sigma^2d\ln(t)\ln(\frac{4t^2}{\delta})$, from bound~(\ref{eq: bound on M_tau}). 
    Therefore, $C$ should first satisfy $C\geq 64$ and for the second constraint we need\footnote{This is from the second constraint: $C \sigma^2 d \ln(t)\ln(\frac{4t^2}{\delta}) \geq \frac{2}{3}C \sigma^2 d \ln(t)\ln(\frac{2t^3}{\delta}) \geq 2d(1+\epsilon)^2 + 32\sigma^2 d \ln(t)(\ln(\frac{2t^3}{\delta}+\frac{1}{8}))$.}: 
    $C\geq \frac{3(1+\epsilon)^2}{4(\ln(2))^2\sigma^2} + \frac{3}{2\ln(2)}+ 48$. Therefore, the lower bound of $C$ should depend on values of $\epsilon$ and $\sigma^2$. 
    The choice of $C=128$ in the main theorem is an example that requires approximately $\frac{1+\epsilon}{\sigma}\leq 7$. 
\end{proof}

\subsection{Proof of Theorem~\ref{thm: meta main regret}}\label{sec: proof of thm: meta main regret}
{Let us treat the number of bins/local algorithms $n$ as the input parameter to the algorithm. The regret bound of UCB-Meta (equation~\ref{eq: meta regret}) should be independent of the input dimension $\xdim$, given the dimension of the linear model $\lineardim$. Therefore, throughout this proof we will abuse the notations and let $d$ denote the linear model dimension for simplicity. }
\begin{proof}
First, we define the ``good event" $E_{good}$ as an event where all confidence bound holds for all bins at all times. For a fixed bin, if $\PP(\theta^* \notin \tilde{C}_t, \exists t) \leq \delta/n$, as set in the algorithm, where $\tilde{C}_t = \{\hat\theta_t + \sqrt{\beta_t}A_t^{-1/2}z_d - A_t^{-1}(\sum_{s=1}^{t-1} b_s x_s)\}$ (Theorem~\ref{thm: main confidence}),
then by union bound, $\PP(\theta^*_k \notin \tilde{C}_{k,t}, \exists k ) \leq  \delta$, where $\tilde {C}_{k,t}$ is the confidence ellipsoid of bin $k$ at time $t$. The good event is 
$E_{good} = \{\forall t, \forall k\in[n], \theta^*_k \in \tilde{C}_{k,t}\}$. 
It happens with probability $\PP(E_{good})\geq 1-\delta$, and the following proof will condition on it. 

Here are some useful notations that make the proof easier to read: let ${N^k(t)}$ denote the number of times base-algorithm $\Acal^{local}_k$ has been selected by(including) time $t$; let $k(t)$ denote the bin selected at time $t$; let $x_t$ denote the action selected at time $t$; let $\{\beta_{k, \cdot}\}$, $\{A_{k, \cdot}\}$ and $\{\hat\theta_{k,\cdot}\}$ denote the set of parameters kept by that base-algorithm~$\Acal^{local}_k$. 

The upper confidence bound on value of the local linear function achieved by sub-algorithms at round $t$ is defined as 
$UCB_{k(t),t}(x) = \la x, \hat\theta_{{k,N^k(t)}}\ra + \sqrt{\beta_{k,N^{k}(t)}}\lVert A_{k, N^{k}(t)}^{-1/2} x\rVert + \epsilon\sum_{\tau=1}^{{N^{k}(t)}-1}\vert x^T A_{N^{k}(t)}^{-1}x_\tau\vert$ for any action $x\in B_k$.
Using the proof of Theorem~\ref{thm: main regret}, the good event hence indicates that for the base-algorithm selected at time $t$ and any action $x\in B_{k(t)}$:
$$UCB_{k(t),t}(x) - 2\sqrt{\beta_{k,N^{k}(t)}}\lVert A_{k, N^{k}(t)}^{-1/2} x\rVert - 2\epsilon\sum_{\tau=1}^{{N^{k}(t)}-1}\vert x^T A_{N^{k}(t)}^{-1}x_\tau\vert \leq \la x, \theta^*_k\ra \leq  UCB_{k(t),t}(x).$$
By Lemma~\ref{lemma: local bias}, the expected local function value $f(x)$ is bounded by $$UCB_{k(t),t}(x) - 2\sqrt{\beta_{k,N^{k}(t)}}\lVert A_{k, N^{k}(t)}^{-1/2} x\rVert - 2\epsilon\sum_{\tau=1}^{{N^{k}(t)}-1}\vert x^T A_{N^{k}(t)}^{-1}x_\tau\vert -\epsilon\leq f(x)\leq UCB_{k(t),t}(x)+ \epsilon. $$
A common way to bound pseudo regret for stochastic bandit is via Wald’s equality: $R_T = \sum_{k=1}^n \Delta_k \EE[\tau_k(T)]$ where $\tau_k(T)$ is the number of times arm $k$ gets pulled until time $T$, and $\Delta_k$ is the reward gap. We cannot trivially follow this, because the rewards of each bins are no longer i.i.d. Instead, we use this gap-independent decomposition for each bin $k$:
\begin{equation}
    \begin{split}
    R_k &= \sum_{t: \text{bin}_t = k}(f^* - f_{x_t\in B_k}(x_t))\\
    &=\sum_{t: \text{bin}_t = k}\left(f^* - UCB_{\Acal_{k(t)},t}+ UCB_{\Acal_{k(t)},t} - f(x_t)\right) \\
    &=\sum_{t: \text{bin}_t = k}\left(f^* - UCB_{\Acal_{k(t)},t} + UCB_{k(t),t}(x_t)+ \epsilon - f(x_t)\right) \\
    &\leq \sum_{t: \text{bin}_t = k}\left(UCB_{k(t),t}(x_t)+ \epsilon - f(x_t)\right) \\
    & \leq \sum_{t: \text{bin}_t=k}\left(2\sqrt{\beta_{k,N^{k}(t)}}\lVert A_{k, N^{k}(t)}^{-1/2} x_t\rVert + 2\epsilon\sum_{\tau=1}^{{N^{k}(t)}-1}\vert x_t^T A_{N^{k}(t)}^{-1}x_\tau\vert +2\epsilon\right) \\
    &= \sum_{s=1}^{N^k(T)}\left(2\sqrt{\beta_{k,s}}\lVert A_{k, s}^{-1/2} x_t\rVert + 2\epsilon\sum_{\tau=1}^{s-1}\vert x_t^T A_{s}^{-1}x_\tau\vert +2\epsilon\right).
    \end{split}
\end{equation}
The first inequality holds because of the algorithm's bin selection rule: if bin $B_k$ is chosen then $f^*\leq UCB_{k*,t} \leq UCB_{k(t)}$. 
By the bounded function value assumption, $f^* - f_{x_t\in B_k}(x_t) \leq 2$, therefore:
\begin{equation}\label{eq: meta regret decomposition - one arm}
    \begin{split}
    R_k &\leq \sum_{s=1}^{N^k(T)}\left(2\sqrt{\beta_{k,s}}\lVert A_{k, s}^{-1/2} x_t\rVert + 2\epsilon\sum_{\tau=1}^{s-1}\vert x_t^T A_{s}^{-1}x_\tau\vert +2\epsilon\right) \land 2\\
    &\leq \sum_{s=1}^{N^k(T)}\left(\underbrace{2\left(\sqrt{\beta_{k,s}}\lVert A_{k, s}^{-1/2} x_t\rVert \land 1\right)}_{\#1} + \underbrace{2\epsilon\sum_{\tau=1}^{s-1}\vert x_t^T A_{s}^{-1}x_\tau\vert}_{\#2} \right) + 2\epsilon N^k(T).
    \end{split}
\end{equation}
\subsubsection{High probability regret bound part I (term $\#1$)}\label{sec: meta regret part I}
First we establish this bound the same way as~\cite{dani2008stochastic}. Namely, for any local misspecified linear bandit algorithm that is ran $T$ times with data $(x_t, y_t)_{t=1\dots T}$,
\begin{equation}\label{eq: original bound on w_t}
    \begin{split}
    \sum_{t=1}^T \lVert x_t^T A_t^{-1} x_t \rVert \land 1
    &\leq 2\ln(\prod_{t=1}^T(1+x_t^T A_t^{-1} x_t)) \\
    &= 2\ln(\prod_{t=1}^T\frac{\det(A_{t+1})}{\det(A_t)}) \\
    &= 2\ln(\frac{\det A_{T+1}}{\det A_1}) \leq 2\ln((1 + T\kappa^2/d)^{d}) \\
    &= 2d\ln(1 + T\kappa^2/d),
    \end{split}
\end{equation}
where we used Lemma~\ref{lemma: determinant-trace}.
Now we can bound term \#1 using bound~(\ref{eq: original bound on w_t}). 
\begin{equation*}
\begin{split}
&\sum_{s=1}^{N^k(T)} 2(\sqrt{\beta_{k,s}}\lVert A_{k, s}^{-1/2} x_t\rVert\land 1) \\
&\leq \sqrt{N^k(T)\sum_{s=1}^{N^k(T)} 4(\beta_{k,s} \lVert x_{k,s}^TA_{k,s}^{-1}x_{k,s}\rVert \land 1)} \\
&\leq \sqrt{4\beta_{k,N^k(T)} N^k(T)\sum_{s=1}^{N^k(T)} \lVert x_{k,s}^TA_{k,s}^{-1}x_{k,s}\rVert \land 1} \\
&=\sqrt{4\beta_{k,N^k(T)} N^k(T)2\ln\left(\prod_{s=1}^{N^k(T)} (1+ x_{k,s}^TA_{k,s}^{-1}x_{k,s}) \right)}\\ 
&=\sqrt{4\beta_{k,N^k(T)} N^k(T)2\ln\left(\frac{\det(A_{N^k(T)+1})}{\det(A_1)}\right)} \\
&= \sqrt{8d\beta_{k,N^k(T)} N^k(T)\ln\left(1 + N^k(T)\kappa^2/d\right)}\\
&\stackrel{\kappa^2=d}{=} \sqrt{8d\beta_{k,N^k(T)} N^k(T)\ln\left(1 + N^k(T)\right)}.
\end{split}
\end{equation*}
\begin{lemma}\label{lemma: determinant-trace}
    For $t\geq 1$,  $1+ x_t^TA_{t}^{-1}x_t = \det(A_{t+1})/\det(A_t)$. Also, $\det(A_t)\leq (1 + (t-1)\kappa^2/d)^{d}$. 
\end{lemma}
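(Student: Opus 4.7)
\textbf{Proof proposal for Lemma~\ref{lemma: determinant-trace}.}
The plan is to handle the two claims separately, both by standard linear-algebra manipulations.

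For the first identity, I would use the recursion $A_{t+1}=A_t+x_tx_t^T$. Factoring out $A_t^{1/2}$ on each side gives
\[
\det(A_{t+1})=\det(A_t)\,\det\!\bigl(I_d+A_t^{-1/2}x_tx_t^TA_t^{-1/2}\bigr).
\]
The matrix inside the second determinant is a rank-one perturbation of the identity, so by the matrix determinant lemma (or, equivalently, by noting that its only nontrivial eigenvalue is $\|A_t^{-1/2}x_t\|^2$) we obtain
\[
\det\!\bigl(I_d+A_t^{-1/2}x_tx_t^TA_t^{-1/2}\bigr)=1+x_t^TA_t^{-1}x_t,
\]
which rearranges to the claimed ratio $\det(A_{t+1})/\det(A_t)=1+x_t^TA_t^{-1}x_t$.

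For the determinant upper bound, I would pass through the trace. Since $A_t=I_d+\sum_{s=1}^{t-1}x_sx_s^T$ and $\|x_s\|^2\le\kappa^2$, linearity of trace gives $\mathrm{tr}(A_t)\le d+(t-1)\kappa^2$. Because $A_t$ is symmetric positive definite with eigenvalues $\lambda_1,\dots,\lambda_d>0$, the AM--GM inequality yields
\[
\det(A_t)=\prod_{i=1}^d\lambda_i\le\Bigl(\tfrac{1}{d}\sum_{i=1}^d\lambda_i\Bigr)^{\!d}=\Bigl(\tfrac{\mathrm{tr}(A_t)}{d}\Bigr)^{\!d}\le\Bigl(1+\tfrac{(t-1)\kappa^2}{d}\Bigr)^{\!d},
\]
which is the desired bound.

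Neither step poses a real obstacle: the first is the classical rank-one determinant update, and the second is the standard trace/AM--GM device used throughout the linear bandit literature (e.g.~\cite{dani2008stochastic, abbasi2011improved}). The only minor care point is the base case $t=1$, where $A_1=I_d$ so $\det(A_1)=1=(1+0)^d$, which is consistent with the stated inequality.
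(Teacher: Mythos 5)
Your proposal is correct and follows essentially the same route as the paper: the rank-one determinant update for the first identity (the paper phrases it via Sylvester's determinant theorem applied to $\det(I_d+A_t^{-1}x_tx_t^T)$, while you use the symmetric factorization $I_d+A_t^{-1/2}x_tx_t^TA_t^{-1/2}$, but these are the same classical argument), and the trace bound combined with AM--GM for the second. No gaps; the base-case remark at $t=1$ is a nice touch but not needed.
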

\begin{proof}[Proof of Lemma~\ref{lemma: determinant-trace}]
\begin{equation*}
\begin{split}
\det(A_{t+1}) &= \det(A_t(I_d +A_t^{-1} x_t x^T_t)) =  \det(A_t)\det(I_d + A_t^{-1}x_t x^T_t) \\
&= \det(A_t) \det(I_1+ x_t^T A_t^{-1}x_t) = det(A_t) (1+ x_t^TA_{t}^{-1}x_t).
\end{split}
\end{equation*}
The third equation uses Sylvester's determinant theorem: $\det(I_m + A_{m\times n}B_{n\times m}) = \det(I_n + B_{n\times m}A_{m\times n})$. The trace of a matrix is the product of its eigenvalues and the determinant is the sum of eigenvalues, and for the trace of the positive definite matrix $A_t$ we have,
\begin{align*}
    \tr(A_t) = \tr(I + \sum_\tau^{t-1}x_\tau x^T_\tau) = d + \sum_\tau^{t-1}\norm{x_\tau}^2 \leq d + (t-1)\kappa^2.
\end{align*}
Therefore, using the inequality of arithmetic and geometric mean, $\det(A_t)\leq (1 + (t-1)\kappa^2/d)^{d}$.
\end{proof}

Summing over all the suboptimal bins, we have that
\begin{equation}\label{eq: meta bound on term one}
    \begin{split}
    &\sum_{k=1}^{n-1} \sum_{s=1}^{N^k(T)} 2(\sqrt{\beta_{k,s}}\lVert A_{k,s}^{-1/2}x_{k,s}\rVert\land 1) \leq \sum_{k=1}^n \sqrt{8d\beta_{k,N^k(T)} N^k(T)\ln\left(1 + N^k(T)\right)} \\
    &\leq \sqrt{\sum_{k=1}^{n}N^k(T) \sum_{k=1}^n 8d\beta_{k,N^k(T)}\ln\left(1 + N^k(T)\right)} \\
    &=\sqrt{T \sum_{k=1}^n 8d\beta_{k,N^k(T)}\ln\left(1 + N^k(T)\right)} \\
    &\stackrel{N^k(T)\leq T}{\leq} \sqrt{8d T n\beta_{T}\ln\left(1 + T\right)}.
    \end{split}
\end{equation}

\subsubsection{High probability regret bound part II (term $\#2$)}\label{sec: meta regret part II}
Here we directly call previous result in bound~(\ref{eq: bound on linear UCB term two}), but replace the total number of step with $N^k(T)$, the number of pulls for one fixed bin $k$. We have for term \#2, 
\[\sum_{s=1}^{N^k(T)}2\epsilon\sum_{\tau=1}^{s-1}\vert x_{k,s}^T A_{k,s}^{-1}x_{k,\tau}\vert \leq 2\epsilon N^k(T)d\sqrt{2\ln(1 + N^k(T))}.\]
Summing over all suboptimal bins, we have that
\begin{equation}
    \begin{split}
        &\sum_{k=1}^n 2\epsilon N^k(T)d\sqrt{2\ln(1 + N^k(T))}\\
        &\stackrel{N^k(T)\leq T}{\leq} 2\epsilon d \sqrt{2\ln(1+T)}\sum_{k=1}^n N^k(T)\\
        &=2\epsilon dT\sqrt{2\ln(1+T)}.
    \end{split}
\end{equation}

\subsubsection{Putting it together}
Combining the decomposition in equation~(\ref{eq: meta regret decomposition - one arm}) and the results in subsections~\ref{sec: meta regret part I} and~\ref{sec: meta regret part II}, we have a high probability regret bound for the UCB-Meta-algorithm:
\begin{equation}
    \begin{split}
    &R_T = \sum_{k=1}^n R_k \\
    &\leq \sqrt{8d T n\beta_{T}\ln\left(1 + T\right)} + 2\epsilon dT\sqrt{2\ln(1+T)} + 2\epsilon T \\
    & =  \bigo(d\ln(T)\sqrt{Tn\ln(T^2 n/\delta)} + \epsilon dT\sqrt{\ln(T)} + \epsilon T).
    \end{split}    
\end{equation}
The last step plugs in $\beta_T = \bigo( d\ln(T)\ln(T^2 n/\delta))$.

\end{proof}
\subsubsection{Proof of Theorem~\ref{thm: doubling for meta-algorithm}}\label{sec: proof of doubling thm}
    \begin{proof}
        Algorithm~\ref{alg: doubling trick for meta} executes Algorithm~\ref{alg: meta algorithm} for a sequence of pre-defined time periods, $\{T_i = 2^i, i = 0,1, \dots N\}$. At the beginning of each period, the update history is cleared and the number of arms $n$ is reset with respect to the current horizon $T_i$. 
        However, 
        since we would like to acquire a high-probability regret bound after applying the doubling trick, we need to set the fail probability of Meta-algorithms during period $i$ to $\delta_i = 6\delta/\pi^2 i^2$. 
        Using a union bound, we can conclude the following ($R_i(T_i)$ denotes the regret incurred in time period i of length $T_i$ only). 
        \begin{equation*}
            \begin{split}
                &\PP(\forall i, \text{the bound hold for } R_i(T_i)) \\
                & = 1 - \sum_i{\PP(\text{the bound does not hold for} R_i(T_i))} \\
                & = 1 - \sum_i \frac{6\delta}{\pi^2 i^2} 
                \approx 1 - \delta.
            \end{split}
        \end{equation*} 
        In the last step we use the fact that the sum of sequence $\sum_i^\infty \frac{1}{i^2}$ converges to $\frac{\pi^2}{6}$. 
         
        Now, the total regret is simply a summation over $i$. The following holds with probability $1-\delta$, 
        \begin{equation} \label{eq: doubling for meta-algorithm}
            \begin{split}
                R(T) &\leq \sum_{i=1}^N R_i(T_i) \\
                &\leq  \sum_{i=1}^N \tilde{\bigo}(d {T_i}^a) = \tilde{\bigo} \left(d \sum_{i=1}^N {2}^{ia}\right) \\
                &\leq \tilde{\bigo} \left( d 2^{a(N-1)}  \right) \\
                &=\tilde{\bigo} (d T^a) .
            \end{split}
        \end{equation}
        At step 4, the number of time periods $N$ is the smallest integer such that $\sum_{i=0}^N 2^i \geq T$, so $N = 1 + \ceil{\log_2(T)}$. The sum of geometric sequence is $2^{a\ceil{\log_2(T)}} = (2^{\log_2(T) + c})^a = T^a 2^{ca}$ for some constant $c$ smaller than 1.  
        Also, note that step 2 holds even though the fail probability is changed to $\delta_i = 6\delta/\pi^2 i$ is because as specified in Theorem~\ref{thm: meta main regret}, the term $\delta$ appears in a log term and the maximum value of $1/\delta$ is $1/\delta_N = \pi^2 \log_2(T) / 6\delta$, therefore the extra factor caused by smaller $\delta$ to the regret is still a log term of $T_i$ and omitted in the proof here. 
        
        Bound~(\ref{eq: doubling for meta-algorithm}) suffices to say that meta-algorithm with doubling trick has the same regret rate as meta-algorithm with known horizon, with some additional constant factors suffered from restarting. 
    \end{proof}

\subsection{Proof of Theorem~\ref{thm: regret of smoothcorral-meta}}\label{sec: proof of thm: regret of smoothcorral-meta}
\begin{proof}
Here we prove that Corral with smooth-wrapper is applicable to this task and achieves minimax expected regret rate apart from log factors. We directly use the proof of Theorem 5.3 in~\cite{pacchiano2020model} and their notations.
$\delta$ is the fail probability, $M$ is the number of base-algorithms, $\rho$ is the reciprocal of the smallest possibility for base-algorithms over the T rounds and $\eta$ is the learning rate. $U(T, \delta)$ is the high probability bound of the selected base-algorithm. The regret of Corral with smooth wrapper is bounded by:
\begin{equation}
    \begin{split}
        R(T) &\leq \bigo(\frac{M\ln(T)}{\eta}+ T\eta) +\delta T + 8\sqrt{MT\log(\frac{4TM}{\delta})}  - \EE[\frac{\rho}{40\eta\ln(T)} - 2\rho U(T/\rho,\delta)\log(T)],
    \end{split}
\end{equation}
and we know from Theorem~\ref{thm: main regret} in our paper that the base algorithm (Algorithm~\ref{alg: misspecified linear UCB}) that locates in the global maximum's bin has anytime high probability regret bound $U(T, \delta) = \tilde\bigo(\epsilon T \lineardim + c(\delta)\lineardim\sqrt{T})$, note that this is because the dimension of the local linear parameter is $\lineardim$. Therefore,
\begin{equation}
    \begin{split}
        R(T) 
        &= \tilde{\bigo}(\sqrt{MT} + \frac{M}{\eta}+T\eta) + \delta T - \EE[\frac{\rho}{40\eta\ln(T)} - 2\rho\tilde\bigo(\lineardim\sqrt{T/\rho}+\frac{\epsilon \lineardim T}{\rho})]) \\
        &=\tilde{\bigo}(\sqrt{MT} + \frac{M}{\eta}+T\eta) + \delta T + \tilde\bigo(\epsilon T \lineardim) + \EE[\tilde\bigo(\lineardim \sqrt{T\rho} - \frac{3\rho}{\eta})].
    \end{split}
\end{equation}
Firstly, we set $\delta = 1/T$ so that $\delta T = \bigo(1)$. Then we maximize this formulation over $\rho$ by setting $\rho = \tilde\bigo(\eta^2 {\lineardim}^2 T)$, yielding the following bound on expected regret.
\begin{equation}
    \begin{split}
    &\tilde{\bigo}(\sqrt{MT} + \frac{M}{\eta}+T\eta + \epsilon T {\lineardim} + \eta {\lineardim}^2 T)\\
    &\stackrel{\substack{M = n, \\ \epsilon = n^{-\frac{\alpha}{\xdim}}}}{=} \tilde{\bigo}(\sqrt{nT}+ \frac{n}{\eta} + n^{-\frac{\alpha}{\xdim}}T\lineardim + \eta {\lineardim}^2 T ).
    \end{split}
\end{equation}
We minimize this by setting the derivative w.r.t $n$ and $\eta$ to zero, i.e. $\eta = \frac{1}{\lineardim}\sqrt{\frac{n}{T}}$ and $n = \tilde\bigo(T^{\frac{\xdim}{\xdim+2\alpha}})$. As a result the rate comes to $\tilde\bigo(\lineardim T^{\frac{\xdim+\alpha}{\xdim+2\alpha}})$. 
\end{proof}

\subsection{Proof of Lemma~\ref{lemma: regret of meta with gamma}}\label{proof of lemma meta regret with input gamma}
\begin{proof}
    According to Theorem~\ref{thm: meta main regret}, the algorithm sets $n = T^{\frac{\xdim}{\xdim+2\alpha'}}/\ln(T)^{\frac{2\xdim}{\xdim+2\alpha'}}$ and $\epsilon = n^{\frac{-\alpha'}{\xdim}}$. 
    Note that we can only use the result in Theorem~\ref{thm: meta main regret} if the high probability upper confidence bound defined in line 4 of sub-procedure Algorithm~\ref{alg: meta algorithm} holds honestly.
    If the input parameter $\alpha'$ is larger than $\alpha$, then the calculated misspecification error $\epsilon$ is smaller than the true $\epsilon^*=\tilde{\bigo}(T^\frac{-\alpha}{\xdim+2\alpha})$, causing the confidence bound to be invalid. Therefore, the regret bound does not hold for when $\alpha'>\alpha$.
    If the input parameter is smaller than $\alpha$, then we can simply use the fact that functions that are $\alpha$-H\"older smooth are also $\alpha'$-H\"older smooth: $H(\alpha, L)\subset H(\alpha', L)$. Therefore, the regret of the algorithm with input parameter $\alpha'\leq \alpha$ is bounded by
        $R(T)\leq \tilde\bigo(d(\alpha')(\sqrt{Tn} + \epsilon T)) = \tilde\bigo(d(\alpha') T^{\frac{\xdim+\alpha'}{\xdim+2\alpha'}})$.
    \end{proof}

\subsection{Proof of Theorem~\ref{thm: adaptation, smooth corral}}\label{sec: proof of thm: adaptation, smooth corral}
\begin{proof}
    There exists an $\estalpha\in \mathcal{G}$, s.t.  $\estalpha \leq \alpha \leq \estalpha + \frac{R}{\log(T)}$, for any true $\alpha$ in $(0, R]$.
    There are two sources that made up the cost of adaptation when using Corral. The first one is the cost of searching over a grid for the unknown point $\hat\alpha$. The second one is the cost of approximation, specifically the difference between the rates achieved for $\hat\alpha$ and the true $\alpha$. We will first derive the cost of grid search.
   
    As specified in the proof of Theorem 5.3 in~\cite{pacchiano2020model}, the following bound of regret of the Corral algorithm holds with respect to any of its base-algorithm with high probability regret bound $U(T,\delta)$. The notations were introduced in Appendix section~\ref{sec: proof of thm: regret of smoothcorral-meta}.
    \begin{equation}
        R(T) \leq \bigo(\frac{M\ln(T)}{\eta} + T\eta) - \EE[\frac{\rho}{40\eta\ln(T)} - 2\rho U(T/\rho,\delta)\log(T)]+\delta T + 8\sqrt{MT\log(\frac{4TM}{\delta})}.
    \end{equation}
    Plugging the regret rate of base-algorithm in Lemma~\ref{lemma: regret of meta with gamma}, the expected pseudo-regret of Corral with smooth wrapper is therefore bounded by:
    \begin{equation}
        \begin{split}
            R(T)&\stackrel{\estalpha \leq \alpha}{\leq} \tilde\bigo(\frac{M}{\eta} + T\eta + \sqrt{MT}) + \delta T - \EE[\frac{\rho}{40\eta\ln(T)} - 2\rho(\tilde{\bigo}(d(\frac{T}{\rho})^{\frac{d+\estalpha}{d+2\estalpha}}))\log(T)] \\
            & \stackrel{\delta = 1/T}{=} \tilde\bigo(\frac{M}{\eta} + T\eta + \sqrt{MT}) - \EE[\tilde\bigo(\frac{\rho}{\eta} - \rho d(\frac{T}{\rho})^{\frac{d+\estalpha}{d+2\estalpha}})] \\
            & = \tilde\bigo(\frac{M}{\eta} + T\eta + \sqrt{MT}) - \EE[\tilde\bigo(\frac{\rho}{\eta} - d T^{\frac{d+\estalpha}{d+2\estalpha}} \rho^{\frac{\estalpha}{d+2\estalpha}})]. 
        \end{split}
    \end{equation}
   
    Similarly, we first maximize over $\rho$ by setting the derivative w.r.t $\rho$ to zero by setting $\rho = \tilde\bigo(\eta^{\frac{d+2\estalpha}{d+\estalpha}} d^\frac{d+2\estalpha}{d+\estalpha} T)$. Then the above rate comes to
    \begin{equation}
        R(T)\leq \tilde{\bigo}(\frac{M}{\eta} + T\eta + \sqrt{MT} + d^\frac{d+2\estalpha}{d+\estalpha} T \eta^{\frac{\estalpha}{d+\estalpha}}). 
    \end{equation}
   
    However, since $\eta$ is a parameter of the Corral algorithm which does not know $\estalpha$ or $\alpha$, we will rely on the parameter $R$ specified by the user. Let us set $\eta$ with repsect to $\alpha = R$, i.e. $\eta = \tilde{\bigo}(d^{-1} T^{-\frac{d+R}{d+2R}})$, and plug in the number of grid points (base-algorithms) $M = \lceil \log(T)\rceil$.
    \begin{equation}\label{eq: rate of corral_smooth + meta grid} 
        \begin{split}
            &\tilde{\bigo}(\frac{M}{\eta} + T\eta + \sqrt{MT} + d^\frac{d+2\estalpha}{d+\estalpha} T \eta^{\frac{\estalpha}{d+\estalpha}}) \\
            &=\tilde{\bigo}(dT^{\frac{d+R}{d+2R}} + d^{-1}T^{\frac{R}{d+2R}} + dT^\frac{d^2+2Rd + R\estalpha}{(d+2R)(d+\estalpha)}) \\
            &=\tilde{\bigo}(dT^{\frac{d+R}{d+2R}} + dT^\frac{d^2+2Rd + R\estalpha}{(d+2R)(d+\estalpha)}).
        \end{split}
    \end{equation} 
    It is obvious that this rate is not the minimax optimal rate for class $\sum(\hat\alpha)$, this gap shows the cost of grid search. 
    
    Next, let us consider the cost of approximation and how it is eliminated by using the linear grid~(\cite{hoffmann2011adaptive}). Namely, we show that adaptation for $\hat\alpha$ is equivalent to adaptation for $\alpha$:
    \begin{equation}\label{eq: rate of corral_smooth + meta} 
        \tilde{\bigo}(dT^{\frac{d+R}{d+2R}} + dT^\frac{d^2+2Rd + R\estalpha}{(d+2R)(d+\estalpha)}) = \tilde\bigo(dT^{\frac{d+R}{d+2R}} + dT^\frac{d^2+2Rd + R\alpha}{(d+2R)(d+\alpha)}).
    \end{equation}
    The equality holds because $\vert \alpha - \estalpha\vert\leq \frac{R}{\log(T)}$. Let $J = \frac{d^2+2Rd + R\alpha}{(d+2R)(d+\estalpha)}$ and $Q = \frac{d^2+2Rd + R\alpha}{(d+2R)(d+\alpha)}$, then $W \stackrel{\triangle}{=}\frac{T^J}{T^Q} \leq T^\frac{(d^2 + 2Rd + R\alpha)\frac{R}{\log(T)}}{(d+2R)(d+\alpha)(d+\estalpha)}$. Taking the log of $W$ yields $\log(W) = R \frac{d^2 + 2Rd + R\alpha}{(d+2R)(d+\alpha)(d+\estalpha)}$. Since both $\alpha$ and $\estalpha$ are bounded by a constant range $(0, 2]$, the term $\frac{d^2 + 2Rd + R\alpha}{(d+2R)(d+\alpha)(d+\estalpha)} \leq C$ for some constant $C$, $W$ is therefore $\bigo(1)$ as well. 

    Therefore, for functions with H\"older exponent $\alpha<R$, the second term in equation~(\ref{eq: rate of corral_smooth + meta}) is the dominant term and the expected regret rate is $\tilde\bigo(dT^\frac{d^2+2Rd + R\alpha}{(d+2R)(d+\alpha)})$. For functions with H\"older exponent $\alpha\geq R$, which essentially belongs to a subset of $\sum(R, L)$, they will all have the same rate which is $\tilde{\bigo}(dT^{\frac{d+R}{d+2R}})$. When $\alpha = R$, this matches the minimax rate for $\alpha$. 
\end{proof}
\newpage
\section{Additional algorithms for the main document}
\subsection{Doubling procedure for Algorithm~\ref{alg: meta algorithm}}\label{sec: doubling for meta algorithm}
\begin{algorithm}[ht]
    \caption{Doubling procedure for Algorithm~\ref{alg: meta algorithm}}
    \begin{algorithmic}[1]\label{alg: doubling trick for meta}
        \REQUIRE Meta-algorithm $\Acal^{global}$ (Algorithm~\ref{alg: meta algorithm}), fail probability $\delta$
        \FOR{$i = 0 \dots $}
            \STATE $T_i = 2^i$
            \STATE Restart $\Acal^{global}$ with initialization parameters $n_i = \lfloor T_i^{\frac{d}{d+2\alpha}}/\ln(T_i)^{\frac{2d}{d+2\alpha}}\rceil$
            and fail probability $\delta_i = 6\delta/\pi^2 i^2$
            \STATE Run $\Acal^{global}$ for $T_i$ steps.
        \ENDFOR
    \end{algorithmic}
\end{algorithm}

\subsection{The Corral Master algorithm}\label{sec: corral algo reference}
For easier reference, we include the copy of Algorithm 7 in~\cite{pacchiano2020model}. 
\begin{algorithm}[ht]
    \caption{Corral Master (Algorithm 7 in~\cite{pacchiano2020model})}
    \begin{algorithmic}[1]\label{alg: corral master}
        \REQUIRE Base algorithms $\{\mathcal{B}_j\}_{j=1}^M$, learning rate $\eta$.
        \STATE Initialize: $\gamma = 1/T, \beta = e^{\frac{1}{\ln(T)}}, \eta_{1,j} = \eta, \rho^j_1 = 2M, \underline{p}^j_1 = \frac{1}{\rho_{1}^j}, p_1^j = 1/M$ for all $j\in [M]$. 
        \FOR{$t=1, \dots , T$}
            \STATE Sample $i_t\sim p_t$. 
            \STATE Receive feedback $r_t$ from base $\mathcal{B}_{i_t}$. 
            \STATE Update $p_t, \eta_t$ and $\underline p_{t}$ to $p_{t+1}, \eta_{t+1}$ and $\underline p_{t+1}$using $r_t$ via Corral-Update (takes input $\eta_t, p_t,\beta$, lower bound $\underline p_t$ and current feedback $r_t$).
            \FOR{j=1, \dots,  M}
            \STATE Set $\rho_{t+1}^j = \frac{1}{\underline p_{t+1}^j}$.  
            \ENDFOR
        \ENDFOR
    \end{algorithmic}
\end{algorithm}

The corral update procedure is in Algorithm 5 and the smooth wrapper for the base-algorithms in Algorithm 3 in~\cite{pacchiano2020model}.

\end{document}